\documentclass{article}
\usepackage{iclr2027_conference,times}

\usepackage[utf8]{inputenc}
\usepackage[T1]{fontenc}
\usepackage{amsmath,amssymb,amsfonts,amsthm}
\usepackage{graphicx}
\usepackage{booktabs,multirow,makecell,colortbl,array,threeparttable}
\usepackage{wrapfig,enumitem,appendix,float}
\usepackage{microtype,nicefrac,pifont,xspace}
\usepackage{xcolor,tcolorbox,tikz}
\usepackage{hyperref,url}
\hypersetup{hidelinks}

\newsavebox{\forgeTableBox}
\newcommand{\forgeTableFit}[2]{%
  \sbox{\forgeTableBox}{#2}%
  \ifdim\wd\forgeTableBox>#1\relax
    \resizebox{#1}{!}{\usebox{\forgeTableBox}}%
  \else\usebox{\forgeTableBox}\fi
}

\newcommand{\ours}{\textsc{Forge}\xspace}
\newtheorem{proposition}{Proposition}

\definecolor{lightblue}{RGB}{222,235,247}
\definecolor{lightgray}{RGB}{245,245,245}
\definecolor{natureblue}{RGB}{0,76,153}
\definecolor{naturepurple}{RGB}{115,65,130}
\definecolor{naturegray}{RGB}{248,248,248}
\definecolor{natureteal}{RGB}{32,128,128}
\definecolor{naturemagenta}{RGB}{170,40,120}
\definecolor{wine}{HTML}{830E0D}

\newcommand{\fancynumber}[1]{%
  \raisebox{1pt}{\tikz[baseline=(char.base)]{\node[
    shape=circle,draw=black,fill=natureblue!20,inner sep=0pt,
    minimum size=0.8em,font=\tiny,text=black](char){#1};}}%
}
\newtcolorbox{definitionbox}{
  colback=naturegray,colframe=natureblue!40,boxrule=1pt,arc=8pt,
  left=4pt,right=4pt,top=0pt,bottom=0pt,
}
\newtcolorbox{takeawaybox}[1][]{
  colback=naturegray,colframe=natureblue!40,title={\textsf{#1}},
  coltitle=black,boxrule=1pt,arc=8pt,
  left=4pt,right=4pt,top=0pt,bottom=0pt,
}
\newtcolorbox{hypothesisbox}{
  colback=naturegray,colframe=naturepurple!40,boxrule=1pt,arc=8pt,
  left=4pt,right=4pt,top=0pt,bottom=0pt,
}

\title{\ours: Form-Optimal Routing of Grounded Evidence for Frozen LLM Agents}

\def\forgeWebsiteURL{https://xixiaouab.github.io/projects/FORGE/}
\def\forgeCodeURL{https://github.com/xixiaouab/FORGE-code}
\newcommand{\forgeHeaderMark}{{\normalfont\small Preprint.}}
\newcommand{\forgeVersionDate}{2026-09-27}
\newcommand{\forgeHeaderDate}{{\normalfont\small\itshape\forgeVersionDate}}
\renewcommand{\headrulewidth}{0.4pt}

\makeatletter
\long\def\@author{%
  {\bfseries
  \mbox{Xi Xiao$^{1}$},
  \mbox{Yunbei Zhang$^{2}$},
  \mbox{Chen Liu$^{3}$},
  \mbox{Lin Zhao$^{4}$},
  \mbox{Jialin Chen$^{3}$},\\
  \mbox{Tianchen Zhao$^{5}$},
  \mbox{Xiang Xu$^{5}$},
  \mbox{Youngeun Kim$^{6}$},
  \mbox{Tianyang Wang$^{1}$},
  \mbox{Min Xu$^{7}$}\par}
  \vspace{4pt}
  {\normalfont\fontsize{10}{12.5}\selectfont
  \mbox{$^{1}$University of Alabama at Birmingham}\quad
  \mbox{$^{2}$Tulane University}\\
  \mbox{$^{3}$Yale University}\quad
  \mbox{$^{4}$Northeastern University}\quad
  \mbox{$^{5}$Amazon AGI}\\
  \mbox{$^{6}$Korea University}\quad
  \mbox{$^{7}$Carnegie Mellon University}\par}
  \vspace{3pt}
  {\normalfont\fontsize{10}{12.5}\selectfont
  \ding{41}\hspace{0.3em}\href{mailto:xxiao@uab.edu}{\texttt{xxiao@uab.edu}}\par}
  \vspace{5pt}
  {\normalfont\fontsize{10}{12.5}\selectfont
  \raisebox{-0.4ex}{\includegraphics[height=1em]{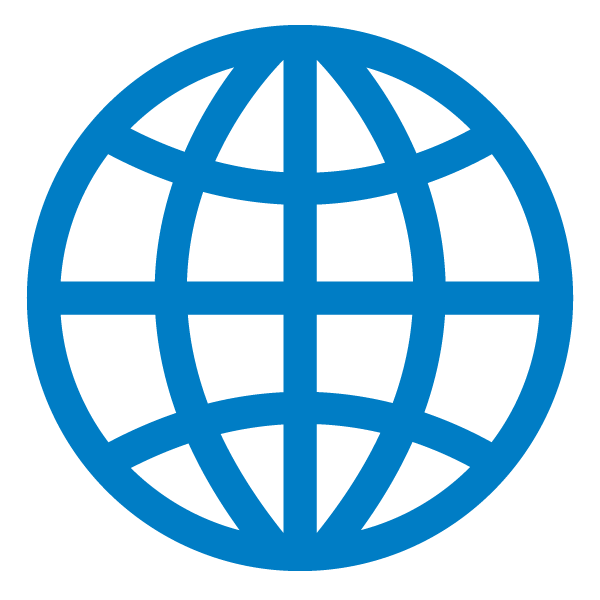}}\hspace{0.3em}%
  \ifx\forgeWebsiteURL\empty\texttt{Website}\else
    \href{\forgeWebsiteURL}{\texttt{Website}}\fi
  \hspace{1em}%
  \raisebox{-0.4ex}{\includegraphics[height=1em]{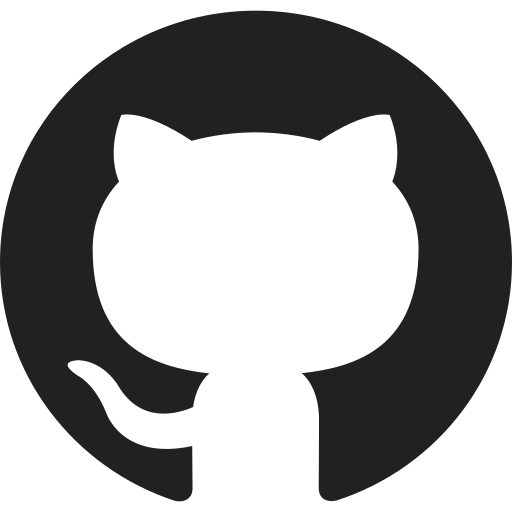}}\hspace{0.3em}%
  \ifx\forgeCodeURL\empty\texttt{Code}\else
    \href{\forgeCodeURL}{\texttt{Code}}\fi
  \par}%
}

\makeatother

\makeatletter
\renewcommand{\@maketitle}{%
  \vbox{\hsize\textwidth
    {\raggedright\hyphenpenalty=10000\exhyphenpenalty=10000
     \fontsize{18}{21.5}\selectfont\bfseries\@title\par}
    \vskip 13pt
    {\raggedright\fontsize{10}{13}\selectfont\@author\par}
    \vskip 18pt
  }%
}
\makeatother

\iclrfinalcopy %
\begin{document}
\raggedbottom
\maketitle
\fancyhead[L]{\forgeHeaderMark}
\fancyhead[R]{\forgeHeaderDate}
\vspace{-6pt}

\begin{abstract}
In agentic AI systems, frozen foundation models are increasingly deployed as closed-weight API endpoints, making downstream adaptation possible only through the inputs and inference procedures surrounding the model. As a result, for each input query, two coupled decisions largely determine both answer quality and token cost: \emph{what evidence to provide} and \emph{how much reasoning budget to allocate}. Fixed defaults along these axes are often suboptimal, misallocating support form or reasoning depth on roughly $80\%$ of queries in our analysis. To address this challenge, we propose \ours, a unified framework for adapting frozen models through per-query routing over a joint action space that spans both support form and thinking depth. Under an entropy-regularized, cost-aware utility objective, we derive a closed-form Boltzmann routing target and instantiate the policy as a lightweight 269K-parameter factorized router. The routing policy is trained around the frozen host, without any weight access, through a three-stage pipeline: offline arm enumeration, supervised Kullback-Leibler (KL) distillation from the Boltzmann target, and Group Relative Policy Optimization (GRPO) refinement with host feedback. Across 5 knowledge-intensive benchmarks and 8 frozen backbones ranging from 7B to 671B parameters, \ours improves accuracy at 42--45\% lower token cost on both main hosts, transfers zero-shot across hosts at lower token cost, and composes with intrinsic thinking budgets where available.
\end{abstract}

\begin{figure}[H]
  \centering
  \includegraphics[width=0.95\linewidth]{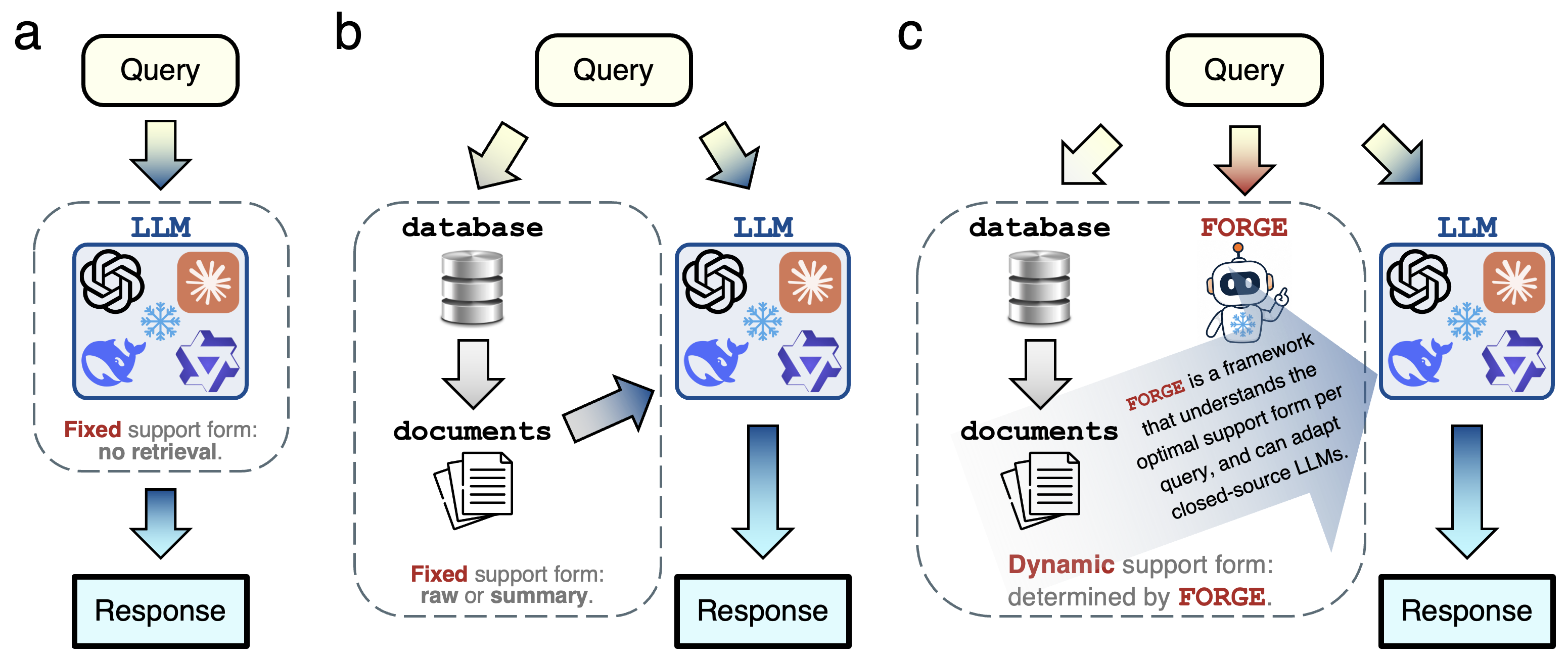}
  \caption{Fixed support forms in frozen-host workflows (a,b) and per-query
  support-form selection with \ours (c).}
  \label{fig:teaser}
\end{figure}
\clearpage

\section{Introduction}
\label{sec:intro}

Agentic AI has seen remarkable progress in recent years, with many of the most capable systems built on top of large foundation models~\citep{achiam2023gpt, comanici2025gemini, sapkota2025ai, ferrag2025llm}. Aligning these general-purpose models with task-specific requirements usually relies on adaptation via fine-tuning, using methods such as low-rank adaptation~(LoRA)~\citep{hu2022lora} and residual adapters~\citep{rebuffi2017learning}. However, foundation models are increasingly deployed as closed API endpoints, with private weights and no access to internal states, making host-side fine-tuning infeasible~\citep{sun2022black, cheng2024black, gao2023retrieval, asawa2025advisor}. This raises a central question: how can we effectively adapt frozen, closed-weight foundation models to downstream tasks?

For a fixed frozen host, effective adaptation hinges on two axes: \emph{what evidence to provide}~\citep{jeong2024adaptiverag, asai2024selfrag, zhu2026tiermem, lewis2020retrieval, karpukhin2020dense},
and \emph{how much reasoning budget to allocate}~\citep{openai2024o1, deepseek2025r1, anthropic2025claude37, wang2025adareasoner, wei2022chain, snell2024scaling, press2023measuring, trivedi2023interleaving, yue2025don,muennighoff2025s1}. In practice, however, agentic AI workflows commonly rely on fixed forms of support applied uniformly across queries, such as direct inference or retrieval-augmented generation with a predefined evidence format (Figure~\ref{fig:teaser}). This uniform treatment can be suboptimal because the most effective support form and reasoning budget vary substantially across queries. A controlled probe of a frozen \texttt{Qwen3-8B} on \texttt{HotpotQA} ($N{=}2{,}500$) makes this gap concrete (Figure~\ref{fig:support_forms}). On the evidence axis, $7.3\%$ of queries are answered \emph{more} accurately without retrieval than with full passages, $79.4\%$ achieve identical F1 under a query-aware compressed summary as under the full top-$k$, and only the remaining $20.6\%$ show any F1 difference between the two. On the reasoning axis with \texttt{Qwen3-8B}, $12\%$ of queries are \emph{harmed} by long reasoning, $60\%$ reach identical F1 under \textsc{NoThink} as under \textsc{Think-High}, and only $28\%$ genuinely benefit. These results suggest that both axes call for adaptive, per-query decisions. Beyond this, the two decisions are inherently coupled: better evidence reduces the reasoning needed to reach an answer, and deeper reasoning reduces the evidence required to support it. Addressing either alone leaves this coupling unexploited.

This observation raises three open questions.
\fancynumber{1}~\textbf{How should the two axes be jointly adapted at the query level?} Adaptive-retrieval and adaptive-reasoning methods each address one axis while fixing the other, leaving no formulation that adapts both together per query. 
\fancynumber{2}~\textbf{How can such a joint policy be trained around a frozen, weight-inaccessible host?} Since the host cannot be finetuned, adaptation must rely on a separate router that learns from external signals such as offline labels and online host feedback. However, existing methods treat these two signals as separate training paradigms, rather than as complementary signals for a common cost-aware objective. 
\fancynumber{3}~\textbf{What target should such a policy be trained toward?} Existing routers define training targets heuristically, without a closed-form connection to the underlying cost-aware decision problem.

\begin{figure}[!b]
  \centering
  \includegraphics[width=\linewidth]{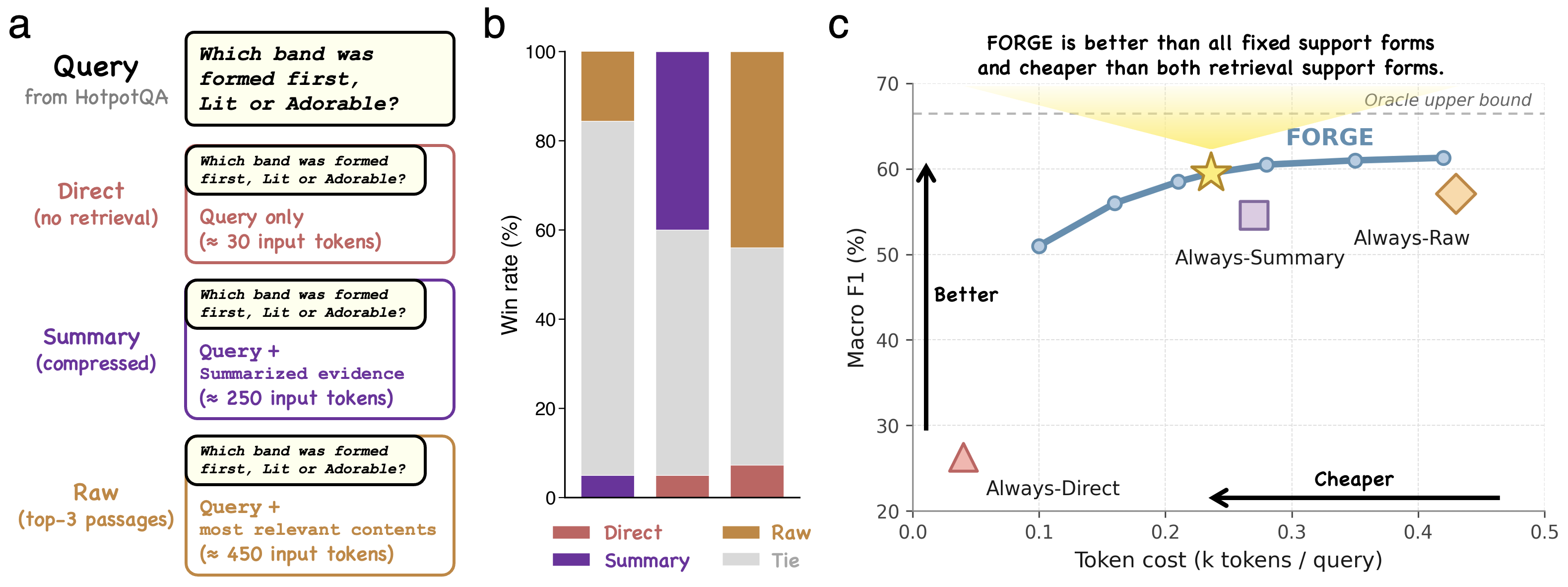}
  \caption{\textbf{Frozen LLM agents are picky readers.} \textbf{a.} The three common support forms. \textbf{b.} No support form is universally optimal; the best choice depends on the input query. \textbf{c.} Our proposed \ours establishes a new Pareto frontier over fixed support forms.}
  \label{fig:support_forms}
\end{figure}

\emph{\textbf{To answer~\fancynumber{1}, we formalize joint adaptation as per-query routing over a unified action space.}}
The action $a = (\mu, \theta)$ pairs a \emph{support form} $\mu \in \{\mathrm{Direct}, \mathrm{Summary}, \mathrm{Raw}\}$, corresponding to three evidence-channel rates between the corpus and the host (no retrieval, query-aware compressed retrieval, and full top-$k$ passages), with a \emph{thinking depth} $\theta$ (two settings on non-thinking hosts and four on hosts that expose test-time compute control).
For each query, the router 
selects a single joint action $a$ that maximizes the Pareto utility 
$U(x, a) = F_1(x, a) - \lambda_\mathrm{in}\tilde c_\mathrm{in}(a) - \lambda_\mathrm{out}\tilde c_\mathrm{out}(x, a)$, 
trading answer F1 against normalized input and output token cost. 
Crucially, the two axes are no longer optimized in isolation: a single 
utility evaluates the joint choice and a single policy outputs it per query.
Within this formulation, \ours instantiates the 
joint policy as a 269K-parameter factorized router that selects the action in a single 
forward pass before the frozen host answers. Importantly, standard RAG, Adaptive-RAG, and tiered-memory routing arise as degenerate special cases of the proposed \ours by fixing one or more routing dimensions.

\emph{\textbf{To answer~\fancynumber{2}, we train the router through a three-stage pipeline that runs entirely around the frozen host without changing its weights.}}
Stage 0 enumerates a 3-arm warm-start subset $\mathcal{A}_\mathrm{warm} = \mathcal{M} \times \{\mathrm{NoThink}\}$ on $1{,}500$ training queries, recording per-arm F1 and token cost ($\approx 4{,}500$ host calls, under one hour).
Stage 1 fits the warm-start router $\pi_{\theta_0}$ to a Boltzmann target restricted to $\mathcal{A}_\mathrm{warm}$ via KL distillation (CPU-only, two minutes).
Stage 2 lifts the policy to the full alphabet $\mathcal{A}$ via GRPO~\citep{shao2024deepseekmath} with sampled host rewards and a KL anchor to $\pi_{\theta_0}$ ($8$--$12$ hours on one 8-GPU node).
The same pipeline produces the $6$-arm router for non-thinking hosts, the $12$-arm joint-policy router that composes with the reasoning-budget axis on capable hosts, and a single router that transfers zero-shot across hosts. Router computation adds $0.5$--$5.5$ ms per query; on a fresh query, the full variant also issues four parallel host probes for its features, whereas \ours-Lite needs only the final answer call (\S\ref{sec:main_results}, App.~\ref{app:latency}).

\emph{\textbf{To answer~\fancynumber{3}, we ground the router in a unified Pareto reward and a closed-form Boltzmann target.}}
The Pareto utility above is the system's single source of truth: every training stage, every baseline, and every Pareto operating point sweeps along it.
Adding an entropy regularizer to the cost-aware utility maximization, as in maximum-entropy reinforcement learning~\citep{ziebart2008maximum, haarnoja2018soft}, fixes a unique closed form on the discrete action space, $\pi^*(a \mid x) \propto \exp(U(x, a) / \tau)$. The same softmax form is also the choice rule of discrete-choice random-utility models~\citep{mcfadden1974conditional}, suggesting that the Boltzmann target is a natural cost-aware decision rule rather than an ad hoc soft label. Stage 1 distills this target on the warm-start actions, and Stage 2 refines the policy against the same utility with sampled host rewards; App.~\ref{app:regret} characterizes the idealized fixed point of the KL-anchored Stage-2 objective.
Proposition~\ref{prop:warm_start_factorization} (App.~\ref{app:warmstart_proof}) further decomposes the warm-start mismatch into a support-form marginal gap plus a closed-form $\log|\Theta|{-}\mathcal{H}$ cost from the uniform thinking head. These results characterize the target and its initialization rather than guarantee convergence of the implemented optimizer.

Across $5$ knowledge-intensive benchmarks and $8$ frozen LLM backbones from 7B to 671B parameters across three deployment regimes (non-thinking local, thinking-capable, frontier non-thinking), \ours Pareto-dominates Always-Raw on every cell tested on \texttt{Qwen3-8B} and \texttt{Mistral-7B} ($+2.4$ and $+2.9$ macro F1 at $45\%$ and $42\%$ lower cached token cost; \S\ref{sec:main_results}).
A single router trained on \texttt{Qwen3-8B} transfers zero-shot to three frontier backends, matching or exceeding Always-Raw F1 on $11$ of $15$ task / host cells at $30\%$+ lower cached cost (\S\ref{sec:transfer}).
On thinking-capable hosts the joint $12$-arm policy exceeds Always-Raw with maximum thinking budget by $+1.4$ to $+1.9$ macro F1 at roughly half the total token cost (\S\ref{sec:joint_thinking}). We
believe that this work contributes to the tractable, theoretically grounded, and broadly transferable axis of frozen-agent adaptation.
\section{Preliminaries and Related Works}
\label{sec:prelim}
\subsection{Related Works}
\label{sec:related}

\emph{Model routing} selects the host~\citep{chen2024frugalgpt, ong2024routellm, jiang2025s3, cheng2024xrag, su2024dragin, jiang2023active, hu2024routerbench, dinghybrid}; \emph{adaptive retrieval} selects whether and what to retrieve~\citep{jeong2024adaptiverag, asai2024selfrag, zhu2026tiermem}; \emph{thinking-budget control} selects reasoning effort~\citep{wang2025adareasoner, alomrani2025reasoningbudget, welleckdecoding, yao2023tree}.
Building on these choices, including partial support-form selection, \ours jointly selects Direct, Summary, or Raw evidence and a host-exposed thinking setting under one measured quality--cost utility (extended discussion in App.~\ref{app:related}).

\subsection{Routing Objective and Boltzmann Solution}
\label{sec:formulation}
\label{sec:theory}
\label{sec:theory:maxent}

Let $A$ be a frozen pre-trained LLM with fixed, inaccessible parameters, $x$ a query, and $\mathcal{D}$ a document corpus.
The router selects $a = (\mu, \theta) \in \mathcal{A} = \mathcal{M} \times \Theta$, with support $\mu \in \{\mathrm{Direct}, \mathrm{Summary}, \mathrm{Raw}\}$ and thinking configuration $\theta$ (\S\ref{sec:arms}). Using prompt constructor $g_a$, the host returns $\hat y = A(g_a(x, \mathcal{D}))$, with F1 $m(x, a) = \mathrm{F1}(\hat y, y)$ and input/output token costs $c_\mathrm{in}(a), c_\mathrm{out}(x, a)$. Dataset-level maxima normalize these costs to $\tilde c_\mathrm{in}, \tilde c_\mathrm{out}$, yielding the per-query \emph{Pareto utility}:
\begin{equation}
\label{eq:pareto_utility}
U(x, a) \;=\; m(x, a) \;-\; \lambda_\mathrm{in} \tilde c_\mathrm{in}(a) \;-\; \lambda_\mathrm{out} \tilde c_\mathrm{out}(x, a),
\quad \lambda_\mathrm{in}, \lambda_\mathrm{out} \geq 0,
\end{equation}
The weights set the quality--cost operating point; $\lambda_\mathrm{out}{=}0$ penalizes input cost only. Labeled queries provide $m(x,a)$ for executed actions during training and evaluation. At inference, routing uses pre-answer features without observing the selected response's F1.

For a fixed query $x$, finite action set $\mathcal{A}$, finite utilities $U(x,a)$, and temperature $\tau>0$, the entropy-regularized objective $\max_{\pi(\cdot\mid x)\in\Delta(\mathcal{A})}\sum_a\pi(a\mid x)U(x,a)+\tau H(\pi(\cdot\mid x))$ has the unique solution
\begin{equation}
\label{eq:boltzmann}
\pi^*(a \mid x) \;=\; \frac{\exp\!\big(U(x, a)/\tau\big)}{\sum_{a'} \exp\!\big(U(x, a')/\tau\big)},
\end{equation}
As $\tau\to0$, it concentrates uniformly on utility maximizers; as $\tau\to\infty$, it becomes uniform. The standard solution gives Stage~1 soft labels on enumerated warm-start actions. Optimality holds for the stated entropy-regularized objective, without guaranteeing that the parameterized router or clipped Stage~2 update attains it. Stage~2 refines the policy with sampled host rewards over the full combination of support forms and thinking settings (\S\ref{sec:training}).

\subsection{The Adaptive-Thinking Action Alphabet}
\label{sec:arms}

The support form $\mu$ is \textit{Direct}, the query alone (negligible input tokens); \textit{Summary}, BM25 retrieval with query-aware sentence-level compression ($\approx 250$ input tokens); or \textit{Raw}, full top-$k$ BM25 passages ($\approx 450$ input tokens).
Thinking depth $\theta$ is \textit{NoThink} (plain decoding), \textit{CoT-Prompt} (the in-context ``Let us think step by step'' suffix), or, when the host exposes a reasoning budget, \textit{Think-Low} or \textit{Think-High} (construction in App.~\ref{appendix:retrieval}).
Hosts without a reasoning mode use $\Theta_\mathrm{nt}{=}\{\mathrm{NoThink}, \mathrm{CoT\text{-}Prompt}\}$ ($K{=}6$); thinking-capable hosts use $\Theta_\mathrm{t}{=}\{\mathrm{NoThink}, \mathrm{CoT\text{-}Prompt}, \mathrm{Think\text{-}Low}, \mathrm{Think\text{-}High}\}$ ($K{=}12$).

Query-dependent utility maximizers motivate adaptive routing. Proposition~\ref{prop:routing} (App.~\ref{app:background_props}) specifies when a per-query utility oracle strictly exceeds every fixed action in expectation; \S\ref{sec:exp} and \S\ref{sec:transfer} examine action heterogeneity and six-arm references. The \emph{Oracle} rows are descriptive references: their selection rule was not retained, precluding claims of utility optimality or F1 upper bounds.

\section{Method}
\label{sec:method}

\ours jointly controls evidence delivery and reasoning depth before the frozen host answers. The insight is their interdependence: evidence quality changes the value of extra reasoning, while the reasoning setting changes what support is worth providing. Candidate actions' F1 and input/output tokens define the utility in Eq.~\eqref{eq:pareto_utility} and soft targets in Eq.~\eqref{eq:boltzmann}. Figures~\ref{fig:architecture} and~\ref{fig:router} summarize training and routing.

\subsection{Router and Deployment Features}
\label{sec:router}
The policy factors into a support-form head and a support-conditioned reasoning head:
\begin{equation}
\label{eq:factorized}
\pi_\theta(a\mid x)=\pi_\theta^{\mu}(\mu\mid h(x))\,\pi_\theta^{\theta}(\theta\mid h(x),\mu).
\end{equation}
A shared two-layer MLP maps inputs through 256 hidden units to a 256-dimensional representation (ReLU, dropout 0.1); the heads use a learned 16-dimensional support-form embedding. The six-arm Full router has approximately 269K parameters (settings in App.~\ref{appendix:router}). Full \ours uses a frozen 768-dimensional BGE embedding and 21 structured features (789 total). Eleven features depend on one greedy host probe and three self-consistency samples, requiring \emph{four host calls before} the routed answer on a fresh query. Caching shifts these calls outside routed-answer accounting but does not eliminate their fresh-query cost. \textsc{Forge}-Lite is retrained on 778 host-independent features: the embedding, four BM25 statistics, one query--top-passage cosine similarity, and five structural features. It requires no training or inference probes and only one final-answer host call. Both variants share the single-step action set and frozen retrieval pipeline. \emph{Cached} counts routed-answer tokens; \emph{Fresh Online} tokens and latency include every host call for a new query (Appendix Figure~\ref{fig:router_deployment}).

\subsection{Training}
\label{sec:training}

\begin{wrapfigure}[22]{R}{0.43\linewidth}
\centering
\includegraphics[width=\linewidth]{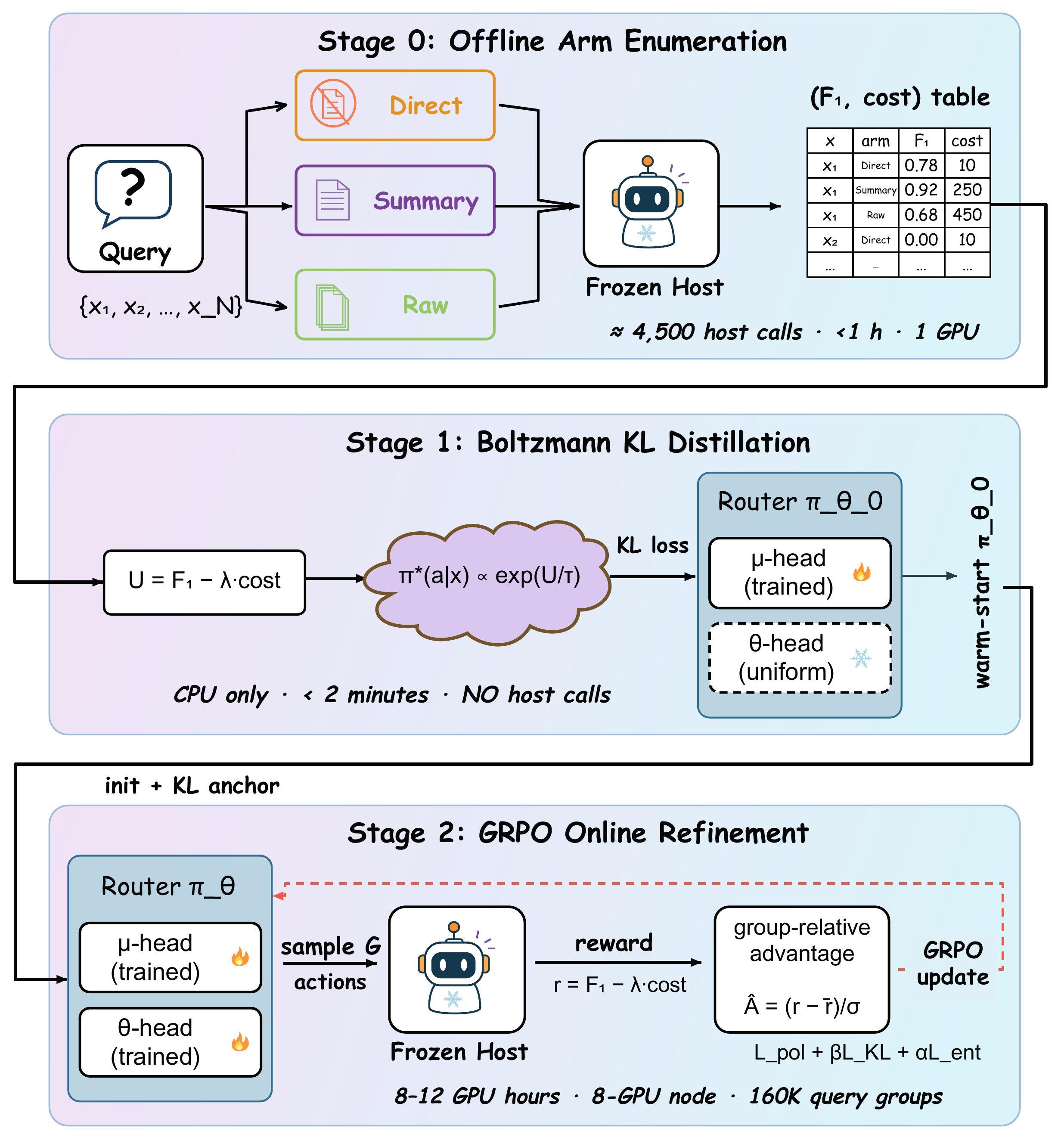}
\caption{\textbf{Three-stage training around a frozen host.} Stage~2 uses $5{,}000\times32=160$K query groups with eight sampled actions each: 1.28M host completions.}
\label{fig:architecture}
\end{wrapfigure}
\paragraph{Stage 0: action evaluation.}
On each labeled warm-start query $x_i$, the frozen host executes $a\in\mathcal A_{\mathrm{warm}}=\mathcal M\times\{\mathrm{NoThink}\}$ and records $(m_i^a,c_{\mathrm{in},i}^a,c_{\mathrm{out},i}^a)$. These calls incur training cost without updating host weights. Gold answers provide $m_i^a$; \S\ref{sec:exp} tests proxy rewards with gold-labeled development data for router hyperparameter selection.

\paragraph{Stage 1: supervised distillation.}
Using $U(x_i,a)$ from Eq.~\eqref{eq:pareto_utility}, the evidence head minimizes $D_{\mathrm{KL}}(\pi^*_{\mathrm{Boltz}}\Vert\pi_\theta^\mu)$ against Eq.~\eqref{eq:boltzmann} on the three warm-start actions at $\tau_0=1$; the reasoning head starts uniform. A separate Stage-1-only control uses six-arm offline observations, holding the full action set fixed to isolate the contribution of Stage~2.

\paragraph{Stage 2: policy-guided refinement.}
Over the full action set, each of $B$ queries receives $G$ sampled actions, stratified to cover every evidence form when $G\geq |\mathcal M|$. Frozen-host executions yield $r_{b,g}=U(x_b,a_{b,g})$ and normalized group-relative advantages
\begin{equation}
\label{eq:advantage}
\hat A_{b,g}=\frac{r_{b,g}-\bar r_b}{\sigma_{r_b}+\epsilon},\qquad
\bar r_b=\tfrac{1}{G}\sum_g r_{b,g},
\end{equation}
where $\epsilon=10^{-4}$ and zero-variance groups receive zero advantage. A PPO-clipped policy term~\citep{schulman2017ppo,shao2024deepseekmath} uses this feedback, with a KL anchor to Stage~1 and an entropy bonus. Appendix~\ref{appendix:router} specifies sampling and the surrogate; the same-action-set ablation (\S\ref{sec:exp}) isolates refinement from alphabet expansion.

\begin{figure}[!t]
\centering
\includegraphics[width=\linewidth]{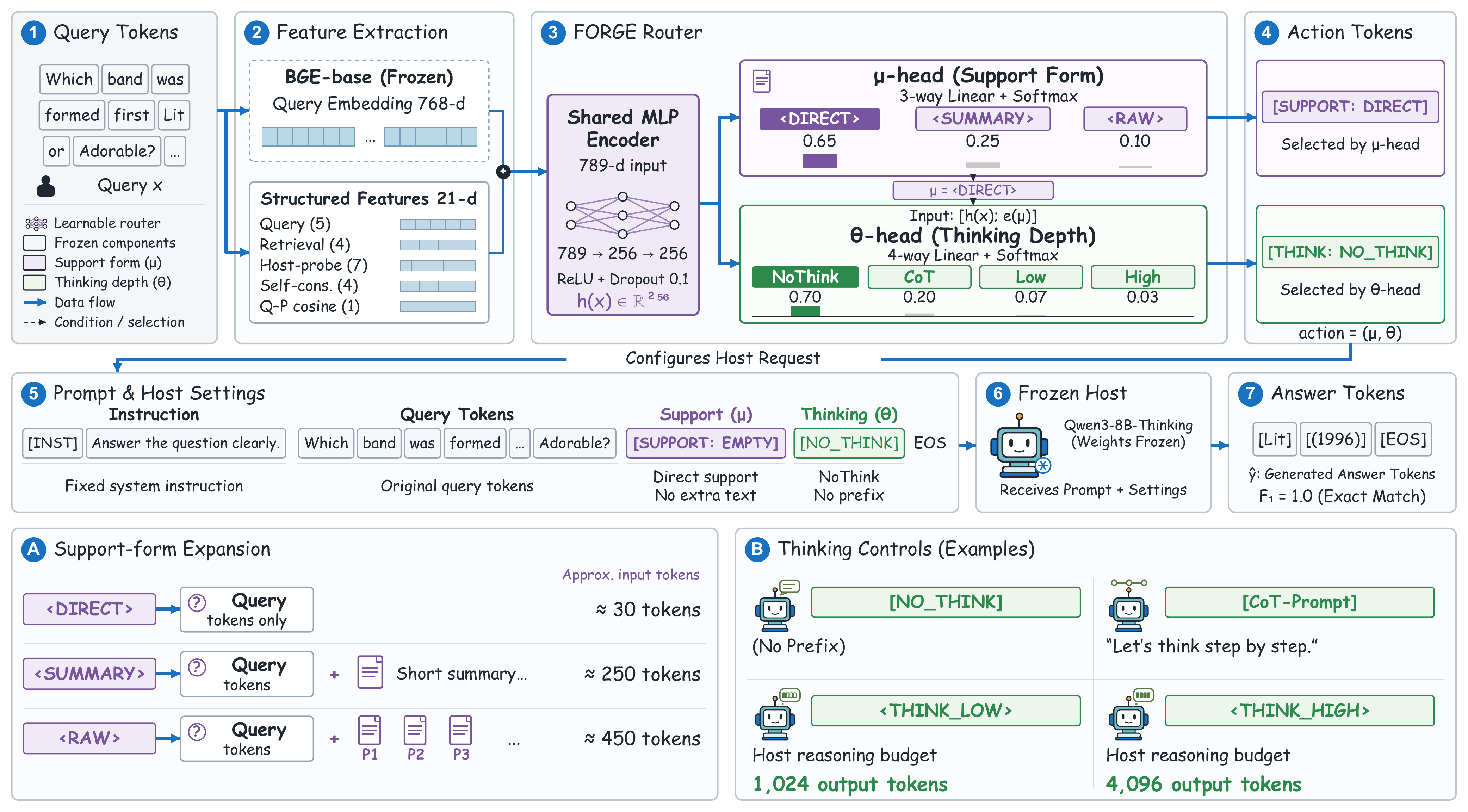}
\caption{\textbf{Factorized router and frozen-host inference.} The $789\to256\to256$ encoder feeds support and support-conditioned thinking heads (features: Table~\ref{tab:features}). Bottom panels expand support choices and alternative settings of one host: CoT is prompted; Low/High use host-exposed budgets. Probabilities, token counts, and the answer are illustrative.}
\label{fig:router}
\end{figure}

\section{Experiments}
\label{sec:exp}

\subsection{Setup and Evaluation Protocols}
\label{sec:setup}

We evaluate HotpotQA~\citep{yang2018hotpotqa}, 2WikiMultiHopQA~\citep{ho2020constructing}, MuSiQue~\citep{trivedi2022musique}, PopQA~\citep{mallen2023popqa}, and FEVER~\citep{thorne2018fever} on Qwen3-8B-Instruct and Mistral-7B-Instruct-v0.3, plus three thinking-capable and three frontier hosts. The six-arm alphabet pairs Direct/Summary/Raw with NoThink/CoT-Prompt; thinking hosts add Think-Low/High for 12 arms. BM25 retrieval is fixed; Summary deterministically extracts from the top three passages. Splits and retrieval settings are in Appendices~\ref{appendix:datasets} and~\ref{appendix:retrieval}.

We report token-level F1, exact match (EM), and cost $\bar C=\bar C_{\rm in}+\bar C_{\rm out}$ in k host tokens/query. \emph{Cached} cost counts the routed answer with precomputed features; \emph{Fresh Online} includes feature acquisition and the answer on a new query: four pre-route host calls for Full, none for Lite. Baselines are fixed Direct/Summary/Raw, BM25-Threshold, Adaptive-RAG~\citep{jeong2024adaptiverag}, TierMem~\citep{zhu2026tiermem}, s3~\citep{jiang2025s3}, and Sysformer~\citep{sysformer2025}. The matched BGE+BM25-KL router uses 768 BGE dimensions, four BM25 statistics, and one query--passage cosine, with the same Stage-0 data, MLP capacity, optimizer, and development budget as \ours, while excluding GRPO refinement and host-dependent feature probes.

\subsection{Answer Quality and the Cost of a New Query}
\label{sec:main_results}

\begin{table}[!htbp]
\caption{\textbf{Canonical Cached results.} Task F1 (\%) and selected-answer cost $\bar C$ (k/query); HQA/MSQ/Pop/FEV denote HotpotQA/MuSiQue/PopQA/FEVER. Bold/boxed values mark best practical task/macro F1 per host. Oracle is a descriptive six-arm reference; Table~\ref{tab:online} gives online costs, including the feature calls needed before answering a new query.}
\label{tab:main}
\centering
\small
\renewcommand{\arraystretch}{1.0}
\setlength{\tabcolsep}{1.8pt}
\begin{tabular}{@{}lrrrrrrr@{\hspace{5.5pt}}rrrrrrr@{}}
\toprule
\multirow{2}{*}{Policy} & \multicolumn{7}{c}{Qwen3-8B} & \multicolumn{7}{c}{Mistral-7B}\\
\cmidrule(lr){2-8}\cmidrule(l){9-15}
& HQA & 2Wiki & MSQ & Pop & FEV & Macro & $\bar C$ & HQA & 2Wiki & MSQ & Pop & FEV & Macro & $\bar C$\\
\midrule
Always-Direct & 26.2 & 25.7 & 10.2 & 15.0 & 54.4 & 26.3 & 0.040 & 25.2 & 17.7 & 7.9 & 23.0 & 47.0 & 24.2 & 0.041\\
Always-Summary & 46.9 & 38.1 & 19.0 & 87.0 & \textbf{81.8} & 54.6 & 0.270 & 38.3 & 24.5 & 11.8 & 80.0 & 69.5 & 44.8 & 0.265\\
Always-Raw & 57.2 & 40.5 & 24.1 & 83.9 & 79.6 & 57.1 & 0.430 & 46.6 & 28.7 & 14.0 & 74.8 & 72.0 & 47.2 & 0.428\\
\cmidrule(lr){1-8}\cmidrule(l){9-15}
BM25-Threshold & 50.5 & 38.0 & 19.5 & 80.5 & 78.0 & 53.3 & 0.268 & 41.5 & 26.2 & 12.5 & 73.0 & 69.0 & 44.4 & 0.263\\
Adaptive-RAG & 56.5 & 40.0 & 23.0 & 80.0 & 77.5 & 55.4 & 0.305 & 45.5 & 27.0 & 13.5 & 73.0 & 70.0 & 45.7 & 0.300\\
TierMem-2arm & 47.6 & 39.6 & 20.8 & 81.8 & \textbf{81.8} & 54.3 & 0.316 & 40.9 & 28.2 & 12.9 & 70.4 & 70.4 & 44.6 & 0.314\\
s3 & 55.0 & 39.0 & 21.5 & 78.5 & 75.5 & 53.9 & 0.305 & 45.5 & 26.5 & 13.8 & 71.0 & 67.0 & 44.8 & 0.310\\
Sysformer & 57.5 & 38.8 & 21.2 & 86.0 & 76.0 & 55.9 & 0.285 & 47.8 & 23.5 & 14.2 & 75.0 & 69.0 & 45.9 & 0.300\\
\cmidrule(lr){1-8}\cmidrule(l){9-15}
Stage 1 (3-arm) & 58.3 & 38.3 & 20.8 & 87.0 & 75.7 & 56.0 & 0.272 & 48.4 & 23.0 & 14.5 & 76.6 & 70.0 & 46.5 & 0.288\\
Stage 1 (6-arm) & 59.0 & 40.5 & 23.5 & 87.2 & 78.0 & 57.6 & 0.255 & 49.0 & 28.5 & 15.5 & 77.5 & 71.0 & 48.3 & 0.272\\
\rowcolor{lightblue!60}\ours & \textbf{60.4} & \textbf{42.7} & \textbf{26.4} & \textbf{87.5} & 80.5 & \textbf{\fbox{59.5}} & 0.236 & \textbf{50.2} & \textbf{30.5} & \textbf{16.7} & \textbf{80.5} & \textbf{72.5} & \textbf{\fbox{50.1}} & 0.249\\
Oracle (6-arm) & 67.4 & 53.2 & 34.0 & 88.5 & 89.2 & 66.5 & 0.168 & 59.1 & 42.6 & 23.2 & 83.4 & 82.0 & 58.1 & 0.182\\
\bottomrule
\end{tabular}
\end{table}

Full \ours reaches 59.5/50.1 macro F1 on Qwen/Mistral, exceeding Always-Raw by 2.4/2.9 points with 45\%/42\% fewer cached answer tokens (Table~\ref{tab:main}). It exceeds Sysformer, the strongest adapted baseline here, by 3.6/4.2 points; Table~\ref{tab:reliability} tests a matched lightweight comparator and isolates Stage~2 from action expansion. Fixed Summary leads on Qwen FEVER (81.8 vs.\ 80.5), which motivates the development-set fallback for selecting a task-specific operating point.

\begin{table}[!htbp]
\caption{\textbf{Fresh Online results (batch size 1).} All host calls and input/output tokens are counted; F1/EM are five-task macro scores. Latency is end-to-end, with Full's four feature calls parallelized. Bold marks best quality or lowest online cost per host.}
\label{tab:online}
\label{tab:fresh_online_app}
\centering
\small
\renewcommand{\arraystretch}{1.0}
\setlength{\tabcolsep}{6.5pt}
\begin{tabular}{@{}llrrrrrrr@{}}
\toprule
\multirow{2}{*}{Host} & \multirow{2}{*}{Policy} & \multicolumn{2}{c}{Host calls} & \multicolumn{2}{c}{Macro (\%)} & \multirow{2}{*}{\makecell{Online\\$\bar C$ (k)}} & \multicolumn{2}{c}{Latency (ms)}\\
\cmidrule(lr){3-4}\cmidrule(lr){5-6}\cmidrule(l){8-9}
& & Pre & Total & F1 & EM & & p50 & p95\\
\midrule
\multirow{3}{*}{Qwen3-8B}
& Always-Raw & 0 & 1 & 57.1 & 48.6 & 0.430 & 122.5 & 148.9\\
& \ours-Lite & 0 & 1 & 58.7 & 49.8 & \textbf{0.242} & 128.0 & 154.1\\
\rowcolor{lightblue!60}\cellcolor{white} & Full \ours & 4 & 5 & \textbf{59.4} & \textbf{50.5} & 0.384 & 271.4 & 329.8\\
\midrule
\multirow{3}{*}{Mistral-7B}
& Always-Raw & 0 & 1 & 47.2 & 40.5 & 0.428 & 129.8 & 157.7\\
& \ours-Lite & 0 & 1 & 49.1 & 41.9 & \textbf{0.256} & 135.3 & 164.8\\
\rowcolor{lightblue!60}\cellcolor{white} & Full \ours & 4 & 5 & \textbf{50.0} & \textbf{42.7} & 0.400 & 291.7 & 354.6\\
\bottomrule
\end{tabular}
\end{table}

Lite improves Qwen/Mistral F1 by 1.6/1.9 while saving 44\%/40\% of online tokens with roughly 5.5\,ms added p50 latency (Table~\ref{tab:online}). Full gains accuracy but raises Qwen p50 from 122.5 to 271.4\,ms. On the DeepSeek-V3.2 API, zero-shot Lite gives 63.1 F1/0.182k tokens/about 2.2\,s versus Raw's 63.4/0.287k/about 2.1\,s: 36.6\% fewer tokens for 0.3 F1 and a small latency increase. API limits prevented online Full measurement. Lite thus preserves one-call efficiency; Full offers additional accuracy when deployment permits the latency of parallel feature probes.

\subsection{Reliability, Strong Baselines, and Stage-2 Value}
\label{sec:ablation}
\label{sec:ablations}

\begin{table}[!htbp]
\caption{\textbf{Reliability and matched Stage~2 gains, Cached.} Mean $\pm$ SD: three splits $\times$ three seeds, equal baseline tuning budgets; 2,500-update rows are single sweep points. Paired-query bootstrap 95\% CIs use the canonical split, comparing policies with Raw or Full with six-arm Stage~1 at matched features, decoding, and cost weights. Boxes mark best nine-run means.}
\label{tab:reliability}
\label{tab:stage2_controlled_main}
\centering
\small
\renewcommand{\arraystretch}{1.0}
\setlength{\tabcolsep}{4pt}
\begin{tabular}{@{}llccccc@{}}
\toprule
\multirow{2}{*}{Host} & \multirow{2}{*}{Policy} & \multicolumn{2}{c}{Macro F1 evidence} & \multirow{2}{*}{\makecell{Cached\\$\bar C$ (k)}} & \multicolumn{2}{c}{Matched Stage~2 effect}\\
\cmidrule(lr){3-4}\cmidrule(l){6-7}
& & \makecell{Nine-run\\mean $\pm$ SD} & \makecell{$\Delta$ vs Raw\\95\% CI} & & Mean $\Delta$F1 & \makecell{Canonical\\95\% CI}\\
\midrule
\multirow{6}{*}{Qwen3-8B}
& Always-Raw & $57.0\pm0.4$ & reference & 0.431 & -- & --\\
& BGE+BM25-KL & $57.5\pm0.5$ & $[+0.1,+0.9]$ & 0.268 & -- & --\\
& \ours-Lite & $58.6\pm0.5$ & $[+1.1,+2.1]$ & 0.243 & -- & --\\
& Stage~1 (6-arm) & $57.6\pm0.5$ & -- & 0.255 & reference & --\\
& Stage~2 (2,500 updates) & 58.9 & -- & 0.244 & -- & --\\
\rowcolor{lightblue!60}\cellcolor{white} & Full \ours (5,000 updates) & \fbox{$\mathbf{59.4}$}$\pm0.4$ & $[+1.9,+2.9]$ & \textbf{0.237} & $+1.8$ & $[+1.3,+2.3]$\\
\midrule
\multirow{6}{*}{Mistral-7B}
& Always-Raw & $47.1\pm0.4$ & reference & 0.429 & -- & --\\
& BGE+BM25-KL & $47.5\pm0.5$ & $[0.0,+0.8]$ & 0.279 & -- & --\\
& \ours-Lite & $49.0\pm0.5$ & $[+1.4,+2.4]$ & 0.257 & -- & --\\
& Stage~1 (6-arm) & $48.3\pm0.5$ & -- & 0.272 & reference & --\\
& Stage~2 (2,500 updates) & 49.5 & -- & 0.259 & -- & --\\
\rowcolor{lightblue!60}\cellcolor{white} & Full \ours (5,000 updates) & \fbox{$\mathbf{50.0}$}$\pm0.5$ & $[+2.3,+3.5]$ & \textbf{0.250} & $+1.7$ & $[+1.2,+2.2]$\\
\bottomrule
\end{tabular}
\end{table}

\begin{wraptable}{R}{0.55\linewidth}
\caption{\textbf{Cumulative feature ablation.} Qwen3-8B Cached F1 (\%); parentheses show changes from the previous row. Full $\phi$ is the final row, evaluated in a separate feature-study run from Table~\ref{tab:main}.}
\label{tab:feature_ablation}
\centering
\small
\setlength{\tabcolsep}{5pt}
\renewcommand{\arraystretch}{1.07}
\begin{tabular}{@{}lcc@{}}
\toprule
Features & HotpotQA & MuSiQue\\
\midrule
Structured only & 56.8 & 23.4\\
$+$ BGE embedding & 60.4\,{\footnotesize$(+3.6)$} & 25.9\,{\footnotesize$(+2.5)$}\\
$+$ Retrieval & 59.3\,{\footnotesize$(-1.1)$} & 26.4\,{\footnotesize$(+0.5)$}\\
$+$ Host probe & \textbf{61.6}\,{\footnotesize$(+2.3)$} & 26.4\,{\footnotesize$(+0.0)$}\\
\rowcolor{lightblue!60}$+$ Self-consistency & 60.6\,{\footnotesize$(-1.0)$} & \textbf{26.9}\,{\footnotesize$(+0.5)$}\\
\bottomrule
\end{tabular}
\end{wraptable}

BGE+BM25-KL's Mistral interval starts at zero; the five retuned adaptive baselines have intervals below zero against Raw on both hosts. The canonical Qwen/Mistral ladder separates labels, actions, structure, and host features: hard-label BGE+BM25 56.8/46.7, KL distillation 57.5/47.5, six-arm routing with the same 773 features 58.4/48.6, Lite 58.7/49.1, and Full 59.4/50.0 macro F1.

Query embeddings yield the largest HotpotQA gain, while the host probe adds 2.3 F1 (Table~\ref{tab:feature_ablation}). Retrieval adds four BM25 statistics and one query--passage cosine. Appendix~\ref{appendix:features} gives the matched 773-to-789-dimensional comparison.

At matched actions, features, decoding, and cost weights, Stage~2 adds 1.8/1.7 macro F1 on Qwen/Mistral and saves 7.1\%/8.1\% cached tokens (Table~\ref{tab:stage2_controlled_main}); 2,500 updates deliver about 70\% of the final gain. Refinement thus improves decisions within the same action space. Training uses $5{,}000\times32\times8=1.28$M host completions, about 384M input/82M output tokens, and 64--96 GPU-hours on one eight-GPU node per operating point. This one-time training cost is separate from recurring feature probes, whose online overhead is measured in Table~\ref{tab:online}.

\subsection{Grounding, Proxy Rewards, and Operating Points}
\label{sec:grounding}

\begin{table}[!htbp]
\centering
\begin{minipage}[t]{0.485\linewidth}
\caption{\textbf{Grounding on Qwen3-8B (\%).} Canonical predictions; bold marks the best comparable value for each metric within each task.}
\label{tab:grounding_main}
\centering
\small
\setlength{\tabcolsep}{2.6pt}
\renewcommand{\arraystretch}{1.0}
\begin{tabular}{@{}lrrrrr@{}}
\toprule
Policy & F1 & \makecell{Gold\\recall} & \makecell{Source\\supp.} & \makecell{Prompt\\supp.} & Unsup.\\
\midrule
\multicolumn{6}{@{}l}{\emph{HotpotQA}}\\
Raw & 57.2 & \textbf{91.2} & 61.9 & 59.1 & 30.8 \\
Summary & 46.9 & 73.4 & 54.0 & 51.8 & 37.1 \\
\rowcolor{lightblue!60}Full & \textbf{60.4} & 64.8 & \textbf{65.3} & 63.7 & \textbf{27.4} \\
\cmidrule(l){1-6}
\multicolumn{6}{@{}l}{\emph{FEVER}}\\
Raw & 79.6 & \textbf{94.1} & 82.0 & 80.3 & 12.9 \\
Summary & \textbf{81.8} & 87.6 & \textbf{84.4} & 83.1 & \textbf{10.4} \\
\rowcolor{lightblue!60}Full & 80.5 & 79.8 & 83.7 & 82.5 & 10.9 \\
\bottomrule
\end{tabular}
\end{minipage}\hfill
\begin{minipage}[t]{0.485\linewidth}
\caption{\textbf{Frozen proxy rewards, Qwen3-8B.} Best outcomes are bold; boxes mark the highest macro F1 and exact match.}
\label{tab:proxy_reward_main}
\centering
\small
\setlength{\tabcolsep}{2.6pt}
\renewcommand{\arraystretch}{1.0}
\begin{tabular}{@{}llrrrr@{}}
\toprule
\makecell[l]{Stage 0/1\\reward} & \makecell[l]{Stage 2\\reward} & F1 & EM & $\bar C$ & \makecell{Source\\supp.}\\
\midrule
\multicolumn{6}{@{}l}{\emph{Gold F1 used in Stage 0/1}}\\
\multirow{4}{*}{Gold F1}
& None & 57.6 & 48.9 & 0.255 & 65.9 \\
& Gold F1 & \textbf{\fbox{59.5}} & \textbf{\fbox{50.6}} & \textbf{0.236} & 67.4 \\
& Verifier & 58.8 & 49.8 & 0.241 & 66.9 \\
& Judge & 59.1 & 50.1 & 0.239 & \textbf{67.5} \\
\cmidrule(l){1-6}
\multicolumn{6}{@{}l}{\emph{No gold F1 in any training stage}}\\
Verifier & Verifier & 58.1 & 49.2 & 0.248 & 66.6 \\
\rowcolor{lightblue!60}Judge & Judge & 58.6 & 49.6 & 0.245 & 67.1 \\
\bottomrule
\end{tabular}
\end{minipage}
\par\smallskip
\begin{minipage}{\linewidth}
\footnotesize
\textit{Grounding (Table~\ref{tab:grounding_main}).} Raw/Summary denote Always-Raw/Always-Summary; Full is Full \ours. F1 is answer F1; gold recall measures annotated evidence in the prompt. Supp./Unsup. denote support/unsupported rates from a fixed NLI-style evaluator (threshold 0.70). Source support/unsupported rate cover all examples; prompt support uses each policy's Summary/Raw subset and is descriptive, not paired. A balanced 200-example human audit gives 87.0\% agreement (Cohen's $\kappa=0.74$).
\par
\textit{Proxy rewards (Table~\ref{tab:proxy_reward_main}).} Verifier/Judge denote the frozen verifier/LLM judge; None is Stage~1 only. Canonical Cached evaluation: five-task macro F1/EM use held-out gold labels (\%); $\bar C$ counts selected-answer k tokens/query. An independent frozen evaluator scores source support (\%). Proxy runs reuse hyperparameters selected on gold-labeled development data, as in the gold-reward experiments.
\end{minipage}
\end{table}

Full improves HotpotQA F1 and lowers unsupported answers: gold evidence recall falls from 91.2 to 64.8 while source support rises from 61.9 to 65.3 (Table~\ref{tab:grounding_main}). Selective evidence delivery can therefore improve answer grounding even with less annotated evidence in the prompt. Fixed Summary leads on all three outcomes for FEVER. These answer-level metrics do not establish reasoning-trace faithfulness, and the generated outputs contain no citations.

A frozen judge supplies all training rewards and reaches 58.6 F1, within 0.9 F1 and 0.3 source-support points of gold-reward training, at 0.245k versus 0.236k cached tokens (Table~\ref{tab:proxy_reward_main}). Gold warm-start with judge-based Stage~2 narrows the F1 gap to 0.4. Proxy rewards thus retain most of the gains; hyperparameter selection still uses labeled development data.

Cost weights $(\lambda_{\rm in},\lambda_{\rm out})=(0.10,0.20)$ are tuned on Qwen HotpotQA development data and fixed thereafter. A Fresh Online sweep from $(0.05,0.00)$ to $(0.20,0.20)$ yields Qwen Full 59.9 F1/0.410k tokens to 58.7/0.358k, and Lite 59.2/0.268k to 58.0/0.216k. Mistral Full spans 50.5/0.430k to 49.3/0.373k, and Lite 49.6/0.287k to 48.4/0.229k; Full's first point exceeds Raw's 0.428k. A fallback requires a positive lower bound on the paired 95\% development utility-difference interval against the best fixed arm. It selects \ours for HotpotQA/2Wiki/MuSiQue and Summary for PopQA/FEVER: 59.7 F1/50.6 EM/0.235k cached tokens versus Full's 59.5/50.6/0.236k, using labeled development data to select a suitable task-specific operating point.

\subsection{Thinking Controls and Transfer}
\label{sec:joint_thinking}

\begin{table}[!htbp]
\caption{\textbf{Joint support and thinking control.} Task F1 (\%) and Cached cost (k/query). Six-arm \ours excludes Think-Low/High; AdaReasoner+~\citep{wang2025adareasoner} fixes support per task. Bold/boxes mark best task/macro F1 within each host.}
\label{tab:joint_thinking}
\centering
\small
\setlength{\tabcolsep}{5pt}
\renewcommand{\arraystretch}{1.0}
\begin{tabular}{@{}llrrrrrrr@{}}
\toprule
\multirow{2}{*}{Host} & \multirow{2}{*}{Policy} & \multicolumn{5}{c}{Task F1 (\%)} & \multirow{2}{*}{\makecell{Macro\\F1}} & \multirow{2}{*}{$\bar C$}\\
\cmidrule(lr){3-7}
& & HQA & 2Wiki & MSQ & PopQA & FEVER & & \\
\midrule
\multirow{6}{*}{\makecell[l]{Qwen3-8B-\\Thinking}}
& Raw+NoThink & 58.0 & 41.2 & 24.5 & 84.5 & 80.0 & 57.6 & 0.485\\
& Raw+Think-High & 62.5 & 47.0 & 30.8 & 87.0 & 89.5 & 63.4 & 1.280\\
& Direct+Think-High & 38.0 & 30.5 & 18.5 & 38.0 & 67.5 & 38.5 & 0.890\\
& AdaReasoner+ & 62.5 & 46.5 & 31.0 & \textbf{88.0} & 90.5 & 63.7 & 0.952\\
& \ours, 6-arm & 60.4 & 42.7 & 26.4 & 87.8 & 82.5 & 60.0 & 0.236\\
\rowcolor{lightblue!60}\cellcolor{white} & \ours, 12-arm & \textbf{64.8} & \textbf{49.2} & \textbf{33.5} & \textbf{88.0} & \textbf{91.0} & \textbf{\fbox{65.3}} & 0.690\\
\midrule
\multirow{6}{*}{\makecell[l]{Claude-4-\\Sonnet}}
& Raw+NoThink & 65.5 & 47.8 & 30.5 & 89.5 & 86.0 & 63.9 & 0.482\\
& Raw+Think-High & 70.5 & 54.0 & 37.0 & 91.5 & 95.0 & 69.6 & 1.350\\
& Direct+Think-High & 49.5 & 36.5 & 25.5 & 49.5 & 75.5 & 47.3 & 0.910\\
& AdaReasoner+ & 70.0 & 53.5 & 36.5 & 92.5 & 95.0 & 69.5 & 0.985\\
& \ours, 6-arm & 67.0 & 49.0 & 32.0 & 92.0 & 88.5 & 65.7 & 0.235\\
\rowcolor{lightblue!60}\cellcolor{white} & \ours, 12-arm & \textbf{72.4} & \textbf{55.5} & \textbf{38.8} & \textbf{93.0} & \textbf{95.5} & \textbf{\fbox{71.0}} & 0.715\\
\midrule
\multirow{6}{*}{DeepSeek-R1}
& Raw+NoThink & 64.0 & 46.5 & 31.5 & 92.0 & 86.5 & 64.1 & 0.480\\
& Raw+Think-High & 68.5 & 53.0 & 38.0 & 93.5 & 94.5 & 69.5 & 1.420\\
& Direct+Think-High & 50.5 & 38.0 & 27.5 & 50.0 & 76.0 & 48.4 & 0.940\\
& AdaReasoner+ & 68.5 & 53.0 & 38.0 & 93.5 & 95.0 & 69.6 & 1.045\\
& \ours, 6-arm & 65.5 & 47.5 & 32.0 & 93.5 & 88.0 & 65.3 & 0.234\\
\rowcolor{lightblue!60}\cellcolor{white} & \ours, 12-arm & \textbf{70.5} & \textbf{55.0} & \textbf{39.5} & \textbf{94.5} & \textbf{95.5} & \textbf{\fbox{71.0}} & 0.730\\
\bottomrule
\end{tabular}
\end{table}

With fixed retrieval and one pre-answer decision, the 12-arm router adds 1.4--1.9 macro F1 over Raw+Think-High at roughly half its cached cost (Table~\ref{tab:joint_thinking}). Gains concentrate on multi-hop tasks. Direct+Think-High reaches 38.5/47.3/48.4 F1 versus Raw+Think-High's 63.4/69.6/69.5: evidence still matters at the High budget, motivating joint control of both.

\begin{figure}[!htbp]
\centering
\includegraphics[width=\linewidth]{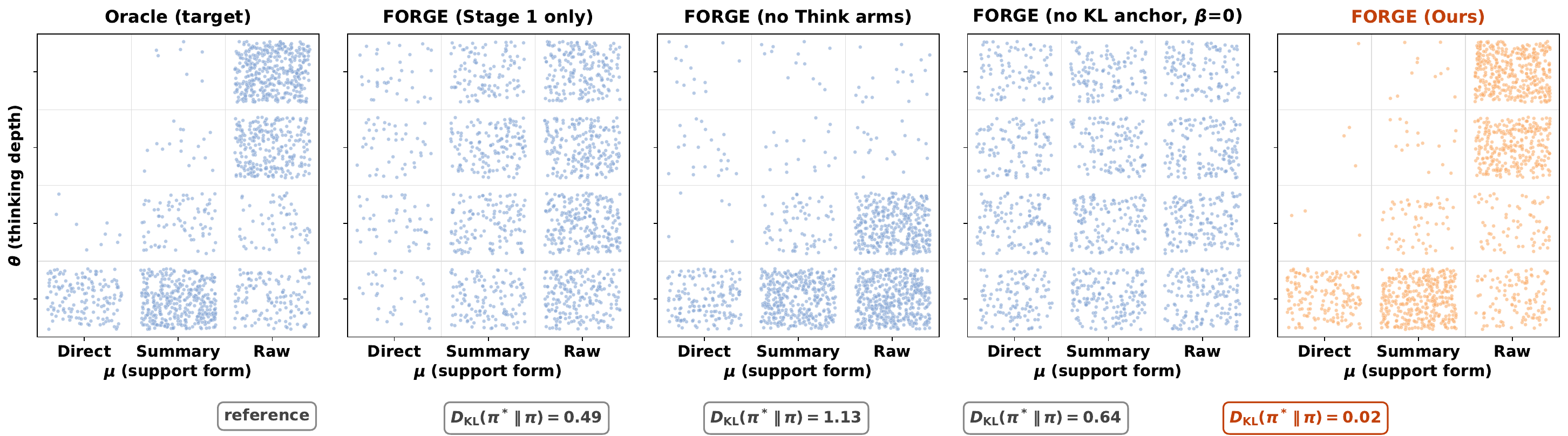}
\caption{\textbf{Joint action distributions} from 1,500 samples per policy. KL compares each distribution with the leftmost Boltzmann training target, distinct from the descriptive Oracle in Tables~\ref{tab:main} and~\ref{tab:nim_transfer_full}.}
\label{fig:policy_distribution}
\end{figure}
The full policy reaches $D_{\rm KL}=0.02$ from the Boltzmann target, versus 0.49 for Stage~1 and 0.64 without the KL anchor (Figure~\ref{fig:policy_distribution}). This supports the anchor's role in retaining the target's selective allocation across joint support--thinking combinations.

\subsection{Cross-Host Transfer}
\label{sec:transfer}

\begin{table}[!htbp]
\caption{\textbf{Zero-shot Cached transfer.} A Qwen-trained Full router with no target-host Stage~2. Cost excludes fresh probes. Bold marks best F1/cost per host; Appendix~\ref{app:nim_transfer_full} gives per-task results.}
\label{tab:nim_transfer}
\centering
\small
\renewcommand{\arraystretch}{1.0}
\setlength{\tabcolsep}{6pt}
\begin{tabular*}{0.88\linewidth}{@{\extracolsep{\fill}}lrrrr@{}}
\toprule
\multirow{2}{*}{Target host} & \multicolumn{2}{c}{Macro F1 (\%)} & \multicolumn{2}{c}{Cached $\bar C$ (k)}\\
\cmidrule(lr){2-3}\cmidrule(l){4-5}
& Raw & \ours & Raw & \ours\\
\midrule
Llama-3.3-70B & 58.5 & \textbf{61.2} & 0.318 & \textbf{0.205}\\
DeepSeek-V3.2 & \textbf{63.4} & \textbf{63.4} & 0.287 & \textbf{0.176}\\
Qwen3.5-397B & 63.1 & \textbf{65.6} & 0.305 & \textbf{0.198}\\
\bottomrule
\end{tabular*}
\end{table}

Full matches or exceeds Raw in 11/15 task--host cells, with macro F1 gains of +2.7/+0.0/+2.5 and 35.5\%/38.7\%/35.1\% fewer cached tokens (Table~\ref{tab:nim_transfer}). One source-trained router improves the quality--cost balance across host families and scales; DeepSeek ties Raw at the reported precision with the largest token saving. The online Lite result supports no-probe API use. Target-host calibration could further tailor Direct support to each host's prior knowledge.

\section{Conclusion}

\ours jointly controls evidence form and reasoning depth around a frozen host. Its factorized router distills an entropy-regularized quality--cost target and refines decisions through sampled host feedback. Across five tasks, Full improves nine-run cached macro F1 over Raw by 2.4/2.9 points on Qwen/Mistral; matched controls attribute 1.8/1.7 points to Stage~2. Lite improves local-host F1 and token cost with one online call, while Full uses four parallel probes for further accuracy. FEVER favors Summary; DeepSeek transfer primarily saves tokens. Results across host families establish joint evidence-form and thinking control as a practical adaptation mechanism with explicit quality, token, and latency tradeoffs across host families and deployment settings.

\setlength{\bibsep}{3pt plus 1pt minus 0.5pt}
\bibliography{references}
\bibliographystyle{iclr2027_conference}

\clearpage
\raggedbottom
\renewcommand{\thefigure}{S\arabic{figure}}
\renewcommand{\theHfigure}{S\arabic{figure}}
\setcounter{figure}{0}
\renewcommand{\thetable}{S\arabic{table}}
\renewcommand{\theHtable}{S\arabic{table}}
\setcounter{table}{0}

\renewcommand\appendixpagename{\centering\noindent\rule{\textwidth}{2pt} \LARGE Technical Appendices \\ \normalsize \noindent\rule{\textwidth}{1pt}}

\begin{appendices}

\appendix
\onecolumn
\appendixpage

\begingroup
\footnotesize
\setlength{\parskip}{0pt}
\setlength{\baselineskip}{10pt}
\makeatletter
\renewcommand*{\l@section}[2]{%
  \addpenalty{\@secpenalty}%
  \vskip 2pt
  \begingroup\bfseries\@dottedtocline{1}{0em}{1.6em}{#1}{#2}\endgroup}
\renewcommand*{\l@subsection}{\@dottedtocline{2}{1.6em}{2.7em}}
\@starttoc{toc}
\makeatother
\endgroup
\clearpage
\addtocontents{toc}{\protect\setcounter{tocdepth}{2}}

\section{Related Work}
\label{app:related}

Section~\ref{sec:intro} framed single-step, pre-answer routing around a frozen language-model host as a design space with one intrinsic axis (how deeply the model reasons, when the host exposes such a control) and three extrinsic axes (which model to call, how aggressively to retrieve, and what form the retrieved support takes).
The three subsections below organize prior work along this taxonomy.
Section~\ref{sec:rw_intrinsic} covers intrinsic adaptive thinking, Section~\ref{sec:rw_extrinsic} covers the two extrinsic axes that have received sustained attention, and Section~\ref{sec:rw_support} covers the support-form axis that FORGE completes.

\subsection{Intrinsic Adaptive Thinking}
\label{sec:rw_intrinsic}

A first line of work allocates inference compute inside the backbone itself.
Frontier reasoning systems learn to expend more decoding tokens on harder queries through chain-of-thought training~\citep{openai2024o1, deepseek2025r1}, and recent assistants expose this as a user-facing thinking budget that is set per request~\citep{anthropic2025claude37}.
A growing body of work studies how to control this budget automatically rather than by user toggle.
\citet{muennighoff2025s1} show that a simple budget-forcing trick at test time recovers most of the gains of full reasoning training on a small base model.
\citet{wang2025adareasoner} train an RL controller that selects reasoning configurations (temperature, depth) per query for any backbone, with theoretical convergence guarantees.
The recent survey of \citet{alomrani2025reasoningbudget} catalogs this space along an L1 (controllability) versus L2 (adaptiveness) axis and lists more than a dozen budget controllers built on top of reasoning-trained hosts.

All of these methods presuppose either a reasoning-trained backbone or a user-exposed thinking toggle, and they condition behavior on query difficulty by changing how the model itself executes.
Frozen hosts deployed behind closed APIs or pinned checkpoints cannot generally be retrained by the application, although some expose a per-request reasoning budget.
FORGE routes support form and exposed thinking options around a fixed host; our evaluation covers single-step decisions.

\subsection{Extrinsic Adaptivity: Model and Retrieval Routing}
\label{sec:rw_extrinsic}

The first extrinsic axis routes queries among backbones of different cost and capability.
FrugalGPT~\citep{chen2024frugalgpt} cascades a query through increasingly expensive models and stops when a verifier reports sufficient confidence.
RouteLLM~\citep{ong2024routellm} trains a preference-based router that selects a strong or weak model per query.
AutoMix~\citep{madaan2024automix} adds self-verification to estimate answer reliability before escalation.
BEST-Route~\citep{ding2025bestroute} jointly routes among models and response sample counts under an explicit cost-quality Pareto objective, the closest prior work in spirit to our framing.
These methods also study cost--quality routing, but their decision variable is the backbone itself. FORGE fixes the host and routes the support form for each query.

The second extrinsic axis decides whether and how aggressively to retrieve.
Retrieval-augmented generation prepends retrieved passages to every query~\citep{lewis2020rag, shi2024replug}; adaptive variants relax this default along several sub-axes.
Self-RAG~\citep{asai2024selfrag} trains the language model to emit reflection tokens that trigger retrieval on demand, at the cost of modifying the host.
CRAG~\citep{yan2024crag} adds a post-hoc evaluator that triggers corrective retrieval when document quality is low, and IRCoT~\citep{trivedi2023interleaving} interleaves retrieval steps with chain-of-thought reasoning for multi-hop queries.
Adaptive-RAG~\citep{jeong2024adaptiverag} trains a query-complexity classifier to pick among no retrieval, single-step retrieval, and iterative multi-step retrieval.
Self-Route~\citep{li2024selfroute} frames the decision as a per-query binary choice between RAG and long-context generation.
More recent routing benchmarks~\citep{wang2026ragrouterbench, bansal2026lightweight, zhou2026selectthensolve} catalog inference-time strategies such as Direct, CoT, and ReAct, and study how to route among them.
The \emph{sufficient context} lens of \citet{joren2025sufficient} treats sufficiency as a property of query--context pairs and uses it to gate abstention. Its analysis of frozen hosts motivates treating support form as a separate decision about evidence delivery.

Both axes operate on \emph{whether} or \emph{how deeply} to retrieve.
Retrieved content typically reaches the host as raw passages or a near-equivalent, leaving support form fixed.
Adaptive-RAG~\citep{jeong2024adaptiverag} is evaluated in Section~\ref{sec:exp}, and a host-confidence Self-Routing heuristic is reported in Appendix~\ref{app:heuristic_baselines}.

\subsection{Support-Form Routing with a Fixed Host}
\label{sec:rw_support}

The third extrinsic axis selects what \emph{form} the external support takes, holding the backbone and the retrieval backend fixed. Prior tiered-memory systems already route between two support forms within memory hierarchies.
TierMem~\citep{zhu2026tiermem} organizes memory into a two-tier hierarchy of summaries and raw pages with a sufficiency router that escalates from summary to raw, reducing tokens by 54\% on LoCoMo~\citep{maharana2024locomo} with minimal accuracy loss.
MemPO~\citep{li2026mempo} and Memento~\citep{zhou2025memento} apply reinforcement learning to optimize memory management, and PRIME~\citep{wang2026prime} builds experience libraries through iterative evolution without backbone training.
A different angle is offered by advisor models~\citep{asawa2025advisor}, which train a small policy that emits free-form natural-language steering instructions to a black-box host on a per-instance basis.
TierMem routes between summary and raw but does not include the Direct option that bypasses retrieved support; advisor models deliver free-form hints rather than choosing among the fixed Direct, Summary, and Raw support forms studied here.

\citet{jiang2025agenticadaptation} catalog the space of agent-supervised tool adaptation, in which a frozen agent supervises the training of a small downstream module via reward feedback, and place tiered-memory routers, advisor models, and similar systems in this family.
Two additional architectural precedents matter for FORGE's design, even though each was originally motivated by a different task.
\citet{jiang2025s3} introduce \emph{s3}, a PPO-trained T2 search agent that decouples the retriever from the generator and optimizes a Gain-Beyond-RAG reward; s3's choice of on-policy PPO over a frozen host establishes that small reward-based routers can train on frontier-scale hosts, and we include it as a learned-routing baseline in Section~\ref{sec:exp}.
\citet{sysformer2025} introduce \emph{Sysformer}, a transformer-based adapter that updates system prompts for frozen LLMs in the \emph{safety} setting (refusal of harmful prompts), and establish that attention-based adapters can steer frozen hosts without weight access; we retarget their adapter template to our support-form alphabet as a transformer-based learned-routing baseline, acknowledging that this retargeting is a partial repurposing rather than a like-for-like comparison.
FORGE studies a three-way Direct/Summary/Raw support choice around a frozen host, factorizes it with an optional thinking-depth choice (Section~\ref{sec:arms}), and refines the routing policy against measured host feedback. Its distinction from TierMem is the additional Direct arm and joint decision studied under the stated single-step task protocol, not the invention of support-form routing itself.
TierMem-style two-arm routing, s3's PPO-trained searcher, and the retargeted Sysformer adapter are all included as experimental baselines in Section~\ref{sec:exp}.

The three forms differ in prompt length and evidence content. FORGE compares their measured answer quality and token use under the specified utility; no information-bottleneck or rate-distortion optimality claim is needed for this empirical choice. A complementary line in agent context compression~\citep{kang2025acon} condenses observation histories for long-horizon agents. FORGE studies the narrower, single-answer decision and does not evaluate long-horizon interactions.

\section{Implementation Details}
\label{appendix:implementation}

All experiments were conducted on an HPC cluster (details anonymized for double-blind review).
Local open-source hosts were served on 64 GB HBM GPU accelerators, with up to \textbf{128 accelerators running concurrently} across 16 nodes at peak.
Closed-source and API-served hosts were queried through the corresponding managed inference endpoints.
The final experimental matrix spans \textbf{8 LLM backbones} ranging from 7B to 671B parameters, \textbf{5 benchmarks}, and \textbf{40 host--benchmark cells}.
Stage 1 supervised training is CPU-only and completes in under two minutes; Stage 2 GRPO refinement runs on a single 8-GPU node and completes in 8 to 12 hours per Pareto operating point.

\subsection{Hardware and Software Environment}
\label{appendix:env}

Table~\ref{tab:env} lists the hardware and software used in our experiments.

\begin{table}[!htb]
\small
\renewcommand{\arraystretch}{1.06}
\centering
\caption{Hardware and software environment.}
\label{tab:env}
\small
\begin{tabular}{ll}
\toprule
\textbf{Category} & \textbf{Value} \\
\midrule
\multicolumn{2}{l}{\textit{Cluster}} \\
\quad System             & HPC cluster (anonymized for review) \\
\quad Node CPU           & 64-core x86\_64 \\
\quad Node GPUs          & 8 $\times$ 64 GB HBM accelerators \\
\quad Node memory        & 512 GB \\
\midrule
\multicolumn{2}{l}{\textit{Software stack}} \\
\quad OS                 & Linux \\
\quad GPU runtime        & 6.2.4 (vendor-specific) \\
\quad Python             & 3.11.11 \\
\quad PyTorch            & 2.6.0 (with vendor-specific GPU backend) \\
\quad Transformers       & 5.0.0 \\
\quad Sentence-Transformers & 5.4.1 \\
\quad Datasets (HF)      & 4.0.0 \\
\quad scikit-learn       & 1.6.1 \\
\quad rank\_bm25         & 0.2.2 \\
\bottomrule
\end{tabular}
\end{table}

\subsection{Frozen Host Configuration}
\label{appendix:hosts}

The host pool is partitioned into three groups by deployment mode and intrinsic capability (Table~\ref{tab:hosts}).
Locally hosted models are loaded in float16 and sharded across eight HBM accelerators per node via \texttt{device\_map="auto"}.
Qwen3-8B-Instruct and Qwen3-8B-Thinking denote the same Qwen3-8B checkpoint run with its thinking mode disabled and enabled, respectively.
API models are accessed through managed inference endpoints without weight access. The cross-host API tables use cached-feature evaluation; only Lite has a measured Fresh Online API example, and Full FORGE's Fresh Online API latency is unmeasured.

\begin{table}[!htb]
\small
\renewcommand{\arraystretch}{1.06}
\centering
\caption{Frozen LLM backbones grouped by deployment mode and thinking capability.}
\label{tab:hosts}
\small
\begin{tabular}{llcll}
\toprule
\textbf{Host} & \textbf{Family} & \textbf{Parameters} & \textbf{Precision} & \textbf{Deployment} \\
\midrule
\multicolumn{5}{l}{\textit{Non-thinking local hosts (main results, Table~\ref{tab:main})}} \\
Qwen3-8B-Instruct (main)     & Qwen 3      & 8B       & float16 & Local (8 accelerators) \\
Mistral-7B-Instruct-v0.3     & Mistral     & 7B       & float16 & Local (8 accelerators) \\
\midrule
\multicolumn{5}{l}{\textit{Thinking-capable hosts (composition experiment, Table~\ref{tab:joint_thinking})}} \\
Qwen3-8B-Thinking            & Qwen 3      & 8B       & float16 & Local (8 accelerators) \\
Claude Sonnet 4              & Anthropic   & --       & --      & API \\
DeepSeek-R1                  & DeepSeek    & 671B MoE & --      & API \\
\midrule
\multicolumn{5}{l}{\textit{Frontier non-thinking hosts (cross-host transfer, Table~\ref{tab:nim_transfer})}} \\
Llama-3.3-70B-Instruct       & Llama 3.3   & 70B      & --      & API \\
DeepSeek-V3.2                & DeepSeek    & 671B MoE & --      & API \\
Qwen3.5-397B-A17B            & Qwen 3.5    & 397B MoE & --      & API \\
\bottomrule
\end{tabular}
\end{table}

Table~\ref{tab:decoding} lists the decoding parameters.
All non-thinking arms share identical decoding to isolate the effect of the action from decoding variability.
Self-consistency (SC) probes use temperature sampling with a short per-call budget; the full online cost still includes all three samples and the greedy probe.
Think-Low and Think-High set the host's per-request reasoning budget (construction in Appendix~\ref{appendix:retrieval}).

\begin{table}[!htb]
\small
\renewcommand{\arraystretch}{1.06}
\centering
\caption{Decoding and inference parameters across host groups.}
\label{tab:decoding}
\small
\begin{tabular}{lll}
\toprule
\textbf{Parameter} & \textbf{Local hosts} & \textbf{API hosts} \\
\midrule
\texttt{max\_new\_tokens} (arm answer) & 64  & 64 \\
\texttt{max\_new\_tokens} (SC sample)   & 24  & 24 \\
Greedy temperature                      & 0   & 0 \\
SC temperature                          & 0.7 & 0.7 \\
SC top-$p$                              & 0.9 & 0.9 \\
SC samples $K_\mathrm{sc}$              & 3   & 3 \\
Think-Low budget (output tokens)        & 1{,}024 & 1{,}024 \\
Think-High budget (output tokens)       & 4{,}096 & 4{,}096 \\
API rate limit used                     & --  & 25 requests/min \\
\bottomrule
\end{tabular}
\end{table}

\subsection{Retrieval and Action Construction}
\label{appendix:retrieval}

All actions share a fixed retrieval backend so that performance differences reflect routing rather than retriever tuning (Table~\ref{tab:retrieval}).
Each action $a=(\mu,\theta)$ selects support form $\mu$ and thinking setting $\theta$.

\paragraph{Summary form construction.}
The Summary action is constructed deterministically without any auxiliary language model: BM25 retrieves the top-$3$ passages, all sentences within those passages are re-ranked by BM25 against the query, and the top-ranked sentences are concatenated until a budget of $\sim$$140$ words is reached.
No second-stage LLM call or learned compressor is invoked for Summary, so its selected-answer input tokens are its host prompt length. This statement concerns support construction; Full FORGE's pre-routing host probes are accounted for separately under the fresh-online protocol in Appendix~\ref{app:latency}.
We use fixed BM25 and deterministic extractive Summary. Learned compressors and other retrieval backends require separate training and evaluation.

\paragraph{Thinking-setting construction.}
NoThink uses plain decoding without a thinking instruction, and CoT-Prompt appends the suffix in Table~\ref{tab:retrieval}; both settings are available on every host. On thinking-capable hosts, Think-Low and Think-High set the host's per-request reasoning budget to $1{,}024$ and $4{,}096$ output tokens, respectively; all other decoding parameters follow Table~\ref{tab:decoding}.

\begin{table}[!htb]
\small
\renewcommand{\arraystretch}{1.06}
\centering
\caption{Retrieval pipeline and action construction.}
\label{tab:retrieval}
\small
\begin{tabular}{ll}
\toprule
\textbf{Component} & \textbf{Value} \\
\midrule
Retriever                     & BM25 (\texttt{rank\_bm25}) \\
Embedding model               & BAAI/bge-base-en-v1.5 (768-d, normalized) \\
Tokenizer                     & lowercase + punctuation removal \\
Stopword list                 & 57 common English stopwords \\
Top-$k$ for Summary/Raw       & 3 passages \\
\midrule
\multicolumn{2}{l}{\textit{Support-form axis $\mathcal{M}$}} \\
\quad Direct ($\mu = 0$)      & Question only, 0 extra input tokens \\
\quad Summary ($\mu = 1$)     & Query-aware sentence selection; ${\le}140$ words \\
\quad Raw ($\mu = 2$)         & Top-3 BM25 passages; ${\le}200$ words per passage \\
\midrule
\multicolumn{2}{l}{\textit{Thinking-depth axis $\Theta$ (thinking-capable hosts)}} \\
\quad NoThink                 & Plain decoding; no thinking instruction \\
\quad CoT-Prompt              & ``Let us think step by step'' suffix \\
\quad Think-Low               & Reasoning budget $1{,}024$ output tokens \\
\quad Think-High              & Reasoning budget $4{,}096$ output tokens \\
\bottomrule
\end{tabular}
\end{table}

\subsection{Router Architecture and Three-Stage Training}
\label{appendix:router}

Full FORGE uses a factorized MLP with a $789 \to 256 \to 256$ shared encoder, followed by support-form and thinking-depth heads (Section~\ref{sec:router}). FORGE-Lite retrains the router on the 778 host-independent input dimensions.
Training proceeds in three stages: offline arm enumeration on the warm-start subset (Stage 0), supervised KL distillation against the Boltzmann target (Stage 1), and policy-guided group-relative refinement using host feedback on the expanded action alphabet (Stage 2).
Table~\ref{tab:router} lists all router-specific hyperparameters.

\begin{figure}[!htbp]
\centering
\begin{tikzpicture}[>=stealth, font=\footnotesize,
  box/.style={draw=natureblue!65, rounded corners=2pt, fill=natureblue!5,
              minimum height=1.85cm, align=center, inner sep=4pt},
  arrow/.style={->, thick, draw=natureblue!80}]
\node[box, text width=3.65cm] (features) at (1.95,0)
  {\textbf{Pre-answer features}\\[3pt]
   BGE query vector (768-d)\\
   Full: 21 structured features\\
   Lite: 10 structured features};
\node[box, text width=3.1cm] (encoder) at (6.25,0)
  {\textbf{Shared encoder}\\[2pt]
   Full: $789\!\to\!256\!\to\!256$\\
   Lite: $778\!\to\!256\!\to\!256$};
\node[box, text width=4.15cm] (heads) at (10.9,0)
  {\textbf{Factorized policy}\\[3pt]
   $\pi^{\mu}$: three support forms\\
   $\pi^{\theta}$: reasoning given support\\
   Full: $\approx269$K parameters};
\node[box, text width=4.15cm, minimum height=0.75cm] (host) at (10.9,-2.0)
  {Selected prompt $\longrightarrow$ frozen host};
\draw[arrow] (features.east) -- (encoder.west);
\draw[arrow] (encoder.east) -- (heads.west);
\draw[arrow] (heads.south) -- (host.north);
\end{tikzpicture}
\caption{\textbf{Full and Lite router deployment paths.} Full uses four pre-routing host generations to construct its 11 host-derived features on a fresh query; Lite is retrained without them. Both variants select one joint action before the frozen host answers, and the host weights remain fixed.}
\label{fig:router_deployment}
\end{figure}

\paragraph{Stage-1 warm-start objective (full form).}
The warm-start router $\pi_{\theta_0}$ is trained by minimizing the KL divergence on the $\mu$ head against the Boltzmann target $\pi^*_\mathrm{Boltz}$ restricted to $\mathcal{A}_\mathrm{warm}$ at temperature $\tau_0{=}1.0$:
\begin{equation}
\label{eq:kl}
\mathcal{L}_\mathrm{warm}(\theta) \;=\; \frac{1}{N_\mathrm{warm}} \sum_{i} D_\mathrm{KL}\!\big(\pi^*_\mathrm{Boltz}(\cdot \mid x_i) \,\big\|\, \pi_\theta^\mu(\cdot \mid h(x_i))\big),
\end{equation}
with $\pi_\theta^\theta$ initialized uniform.
The target is the closed-form optimum of the stated entropy-regularized finite-action utility on $\mathcal{A}_\mathrm{warm}$; the finite-capacity Stage-1 router only approximates this target.
The default warm-start uses three actions. The controlled Stage-2 comparison also evaluates an additional full-alphabet, six-action Stage-1 control under matched features, decoding, and cost weights; that control is distinct from the three-action warm-start row in the canonical table.

\paragraph{Stage-2 clipped policy and auxiliary losses (full form).}
Let $\rho_{b,g}(\theta) = \pi_\theta(a_{b,g} \mid x_b) / \pi_{\theta_\mathrm{old}}(a_{b,g} \mid x_b)$ be the displayed old-policy ratio. The clipped surrogate uses the PPO-style construction of~\citet{schulman2017ppo} with the group-relative advantage $\hat A_{b,g}$ of~\citet{shao2024deepseekmath} (Eq.~\ref{eq:advantage}):
\begin{equation}
\label{eq:grpo_policy}
\mathcal{L}_\mathrm{pol}(\theta) \;=\; -\frac{1}{BG}\sum_{b, g} \min\!\Big(\rho_{b,g}\,\hat A_{b,g},\;\; \mathrm{clip}(\rho_{b,g}, 1-\eta, 1+\eta)\,\hat A_{b,g}\Big),
\end{equation}
with $\eta = 0.2$.
The action groups are stratified to cover support forms when $G \geq |\mathcal{M}|$ (Table~\ref{tab:router}). This changes the behavior proposal to a distribution $q(a\mid x)$ that need not equal $\pi_{\theta_\mathrm{old}}(a\mid x)$. Without a proposal correction involving $q$, the displayed ratio does not make Eq.~\ref{eq:grpo_policy} an unbiased policy-gradient estimator for the old policy. We therefore treat Stage 2 as a policy-guided, group-relative clipped surrogate aligned with measured utility, not an exact gradient step toward or convergence guarantee for the Boltzmann target.
The total Stage-2 loss is $\mathcal{L}_\mathrm{GRPO} = \mathcal{L}_\mathrm{pol} + \mathcal{L}_\mathrm{KL} + \mathcal{L}_\mathrm{ent}$ with the auxiliary terms
\begin{align}
\label{eq:grpo_kl}
\mathcal{L}_\mathrm{KL}(\theta) \;&=\; \beta \cdot \frac{1}{B} \sum_b D_\mathrm{KL}\!\big(\pi_\theta(\cdot \mid x_b) \,\big\|\, \pi_{\theta_0}(\cdot \mid x_b)\big), \\
\label{eq:grpo_ent}
\mathcal{L}_\mathrm{ent}(\theta) \;&=\; -\alpha \cdot \frac{1}{B} \sum_b H\!\big(\pi_\theta(\cdot \mid x_b)\big).
\end{align}
The KL anchor keeps the clipped update close to the Stage-1 warm-start reference $\pi_{\theta_0}$; the entropy bonus discourages premature collapse on the larger alphabet.
Optimization uses AdamW at learning rate $1 \times 10^{-5}$ with $K_\mathrm{inner} = 4$ minibatch updates per rollout before $\pi_{\theta_\mathrm{old}}$ is refreshed.
The KL coefficient $\beta$ follows an adaptive schedule: it is multiplied by $1.5$ every $100$ steps when the observed per-query KL exceeds $0.05$, and divided by the same factor when it falls below $0.005$, so that the policy neither drifts off the warm-start manifold nor stagnates on it.
Stage 0 costs $|\mathcal{A}_\mathrm{warm}| \cdot N_\mathrm{warm} \approx 3 \cdot 1500 = 4500$ host calls (single node, under one hour); Stage 1 is CPU-only and completes in under two minutes. Each Stage-2 operating point uses $T_\mathrm{grpo} B G = 5000 \times 32 \times 8 = 1{,}280{,}000$ atomic model completions, about 384M input and 82M output tokens in the measured resource accounting, and 8--12 hours on one 8-GPU node (64--96 GPU-hours).
BGE and MLP routing take $0.5$--$5.5$ ms in the measured settings, excluding host calls that acquire Full FORGE's probes on unseen queries. Appendix~\ref{app:latency} includes those calls in Fresh Online token and latency accounting.
\begin{table}[!htb]
\small
\renewcommand{\arraystretch}{1.06}
\centering
\caption{Router architecture and three-stage training hyperparameters.}
\label{tab:router}
\small
\begin{tabular}{@{}p{0.42\columnwidth}p{0.54\columnwidth}@{}}
\toprule
\textbf{Hyperparameter} & \textbf{Value} \\
\midrule
\multicolumn{2}{l}{\textit{Architecture}} \\
\quad Shared encoder $h(x)$         & 2-layer MLP, $789 \to 256 \to 256$, ReLU, dropout 0.1 \\
\quad Support-form head $\pi^\mu$   & Linear, 3 logits \\
\quad Thinking-depth head $\pi^\theta$ & Linear, $|\Theta|$ logits, input $[h(x); e(\mu)]$ with $e(\mu) \in \mathbb{R}^{16}$ \\
\quad Total parameter count         & ${\sim}$269K \\
\midrule
\multicolumn{2}{l}{\textit{Stage 0: Offline arm enumeration}} \\
\quad Warm-start alphabet $\mathcal{A}_\mathrm{warm}$ & $\mathcal{M} \times \{\mathrm{NoThink}\}$ (3 arms) \\
\quad Warm-start size $N_\mathrm{warm}$               & 1{,}500 queries \\
\midrule
\multicolumn{2}{l}{\textit{Stage 1: Supervised warm-start (KL distillation)}} \\
\quad Soft-label temperature $\tau_0$   & 1.0 \\
\quad Optimizer                          & AdamW \\
\quad Learning rate                      & $2 \times 10^{-4}$ \\
\quad Weight decay                       & 0.01 \\
\quad Batch size                         & 64 \\
\quad Maximum epochs                     & 50 \\
\quad Early-stopping patience            & 7 (dev accuracy) \\
\midrule
\multicolumn{2}{l}{\textit{Stage 2: policy-guided GRPO refinement}} \\
\quad Full alphabet $\mathcal{A}$        & $K{=}6$ (non-thinking hosts) or $K{=}12$ (thinking hosts) \\
\quad Rollout group size $G$             & 8 \\
\quad Batch size $B$ (queries per step)  & 32 \\
\quad Inner gradient steps per rollout   & $K_\mathrm{inner} = 4$ \\
\quad PPO clip range $\eta$              & 0.2 \\
\quad KL anchor target                   & 0.02 (per query, adaptive $\beta$) \\
\quad KL anchor $\beta$ initial          & 0.05; multiplied by 1.5 every 100 steps if KL out of band \\
\quad Entropy bonus $\alpha$             & 0.01 \\
\quad Sampling                           & Stratified to ensure $\mu$ coverage when $G \geq |\mathcal{M}|$ \\
\quad Optimizer                          & AdamW \\
\quad Learning rate                      & $1 \times 10^{-5}$ \\
\quad Total update steps $T_\mathrm{grpo}$ & 5{,}000 \\
\midrule
\multicolumn{2}{l}{\textit{Pareto utility (default)}} \\
\quad Input cost weight $\lambda_\mathrm{in}$  & 0.1 (swept in Table~\ref{tab:lambda}) \\
\quad Output cost weight $\lambda_\mathrm{out}$ & 0.2 (swept in Table~\ref{tab:lambda}) \\
\bottomrule
\end{tabular}
\end{table}

\subsection{Feature Extraction}
\label{appendix:features}

Full FORGE's input $\phi_{\mathrm{Full}}(x) \in \mathbb{R}^{789}$ combines six feature groups (Table~\ref{tab:features}). Eleven dimensions depend on a greedy host probe and three self-consistency samples, so an unseen fresh-online query requires four pre-routing host calls before its routed answer.
FORGE-Lite is retrained with the 778 host-independent dimensions only: 768 BGE embedding dimensions, four BM25 statistics, one query--top-passage cosine, and five query-structure features. It uses no host probe in training or inference. Structured features are standardized using training-set statistics.

\begin{table}[!htb]
\small
\renewcommand{\arraystretch}{1.06}
\centering
\caption{Feature groups for Full FORGE (789 dimensions) and FORGE-Lite (778 dimensions). Only Full uses the 11 host-dependent dimensions.}
\label{tab:features}
\small
\setlength{\tabcolsep}{4pt}
\renewcommand{\arraystretch}{1.08}
\begin{tabular}{@{}>{\raggedright\arraybackslash}p{0.24\linewidth}>{\raggedright\arraybackslash}p{0.62\linewidth}r@{}}
\toprule
\textbf{Group} & \textbf{Features} & \textbf{Dim} \\
\midrule
Query embedding (BGE)  & Frozen sentence embedding                                              & 768 \\
Query structural       & length, WH flag, entity count, comparison flag, temporal flag         & 5 \\
Retrieval              & BM25 top-1 score, top-5 mean, score gap, score std                    & 4 \\
Host probe             & answer length, output tokens, IDK flag, hedging flag, query overlap, numeric flag, confidence & 7 \\
Self-consistency       & agreement, unique ratio, average length, greedy match                  & 4 \\
Cross-modal            & query--top-1-passage cosine similarity                                 & 1 \\
\midrule
\textbf{Full total}    & All six groups                                                         & \textbf{789} \\
\textbf{Lite total}    & BGE + query structural + retrieval + cross-modal                     & \textbf{778} \\
\bottomrule
\end{tabular}
\end{table}

The strong BGE+BM25 baseline uses 773 host-independent dimensions (768 BGE, four BM25, and one query--top-passage cosine) and matches the Stage-0 data, MLP capacity, optimizer, and development-set search budget. It has no query-structure features, host probes, or GRPO. Table~\ref{tab:feature_ladder_app} reports the canonical cached feature-and-training ladder. Its point estimates should not be combined with the nine-run means in the uncertainty analysis as though they were the same statistic.

\begin{table}[!htb]
\small
\renewcommand{\arraystretch}{1.06}
\caption{\textbf{Canonical cached F1 (\%) for matched feature and training variants.} The 773-dimensional BGE+BM25 baseline supplies a competitive reference. FORGE-Lite adds the five structural features without host calls; Full FORGE adds 11 host-dependent dimensions. All entries are point estimates on the canonical evaluation split, separate from the nine-run means.}
\label{tab:feature_ladder_app}
\centering
\small
\begin{tabular}{lccc}
\toprule
Variant & Dimensions & Qwen3-8B & Mistral-7B \\
\midrule
BGE+BM25-Hard & 773 & 56.8 & 46.7 \\
BGE+BM25-KL & 773 & 57.5 & 47.5 \\
FORGE-773 (three arms) & 773 & 58.0 & 48.0 \\
FORGE-773 (six arms) & 773 & 58.4 & 48.6 \\
FORGE-Lite & 778 & 58.7 & 49.1 \\
Full FORGE & 789 & 59.5 & 50.1 \\
\bottomrule
\end{tabular}
\end{table}

\subsection{Dataset Configuration}
\label{appendix:datasets}

Table~\ref{tab:datasets} lists the train/test splits per host group.
The original tables use a canonical split (data-sampling seed 42), and cross-host transfer uses a matched canonical test subset. The uncertainty analysis combines three independent train/dev/test splits with three training seeds per split on both main local hosts. Its nine-run mean and standard deviation are distinct from canonical point estimates and paired-query bootstrap intervals on the canonical test split.

\begin{table}[!htb]
\small
\renewcommand{\arraystretch}{1.06}
\centering
\caption{Per-benchmark sample sizes for the canonical split (data-sampling seed 42). The additional nine-run uncertainty analysis uses three independent splits and three training seeds per split.}
\label{tab:datasets}
\small
\begin{tabular}{lccc}
\toprule
\textbf{Benchmark} & \textbf{Train (local)} & \textbf{Test (local)} & \textbf{Test (transfer)} \\
\midrule
HotpotQA (distractor)      & 1{,}500 & 300 & 75 \\
2WikiMultiHopQA            & 1{,}500 & 300 & 75 \\
MuSiQue                    & 1{,}500 & 300 & 75 \\
PopQA                      & 1{,}500 & 300 & 75 \\
FEVER                      & 1{,}500 & 300 & 75 \\
\bottomrule
\end{tabular}
\end{table}

\subsection{Asset Licenses and Attribution}
\label{appendix:licenses}

All datasets, frozen language models, and software libraries used in this work are credited to their original creators, accessed under their original licenses, and used in a manner consistent with their intended research use. Table~\ref{tab:licenses} summarizes the assets and their licenses; the underlying datasets and model weights are not redistributed with this work.

\begin{table}[!htb]
\small
\renewcommand{\arraystretch}{1.06}
\centering
\caption{Licenses for existing assets used in this work. Datasets are accessed via their original release channels; frozen LLMs are accessed via their published weights or provider APIs.}
\label{tab:licenses}
\footnotesize
\setlength{\tabcolsep}{4.5pt}
\begin{tabular}{llll}
\toprule
\textbf{Asset} & \textbf{Type} & \textbf{License / Terms} & \textbf{Citation} \\
\midrule
HotpotQA (distractor)      & Dataset & CC BY-SA 4.0           & \citet{yang2018hotpotqa} \\
2WikiMultiHopQA            & Dataset & Apache 2.0             & \citet{ho2020constructing} \\
MuSiQue                    & Dataset & CC BY 4.0              & \citet{trivedi2022musique} \\
PopQA                      & Dataset & MIT                    & --- \\
FEVER                      & Dataset & CC BY-SA 3.0           & --- \\
Wikipedia (retrieval corpus) & Corpus & CC BY-SA 4.0          & --- \\
\midrule
Qwen3-8B-Instruct / -Thinking & Frozen LLM & Apache 2.0        & --- \\
Qwen3.5-397B               & Frozen LLM & Apache 2.0           & --- \\
Mistral-7B-Instruct-v0.3   & Frozen LLM & Apache 2.0           & --- \\
Llama-3.3-70B-Instruct     & Frozen LLM & Llama 3.3 Community  & --- \\
DeepSeek-V3.2 / -R1        & Frozen LLM & MIT                  & --- \\
Claude Sonnet 4            & Frozen LLM (API) & Anthropic API terms & --- \\
\midrule
BGE-base-en-v1.5           & Embedder & MIT                    & \citet{xiao2023bge} \\
\texttt{rank\_bm25}        & Library & Apache 2.0              & --- \\
PyTorch / HuggingFace Transformers & Library & BSD / Apache 2.0 & --- \\
\bottomrule
\end{tabular}
\end{table}

\subsection{Compute Scale}
\label{appendix:compute}

Table~\ref{tab:compute} summarizes the overall compute footprint.
At peak, \textbf{128 GPU accelerators ran concurrently} across 16 nodes for Stage 0 arm enumeration on the local backbones.
Stage 2 GRPO refinement uses a single 8-GPU node per Pareto operating point.
API-based experiments with frontier proprietary models were executed on externally managed infrastructure and incur no local cluster compute time.
Stage 2 is a one-time source-host training cost, not a target-host cost incurred on each transfer query. In the matched six-action, nine-run comparison, it adds 1.8 macro-F1 points on Qwen3-8B and 1.7 on Mistral-7B over the corresponding Stage-1 checkpoints, while reducing cached selected-answer tokens by 7.1\% and 8.1\%, respectively. The larger three-arm-to-six-arm difference also includes action-alphabet expansion and must not be attributed wholly to GRPO. Skipping Stage 2 avoids rollout cost but leaves Full FORGE's Fresh Online host probes.

\begin{table}[!htb]
\small
\renewcommand{\arraystretch}{1.06}
\centering
\caption{Compute resources and experiment counts.}
\label{tab:compute}
\small
\begin{tabular}{lr}
\toprule
\textbf{Resource or artefact} & \textbf{Value} \\
\midrule
Cluster nodes used (unique)                & 16+ \\
\textbf{Peak concurrent GPU accelerators}  & \textbf{128} \\
SLURM allocations active                   & 2 institutional allocations \\
Stage 0 host calls (per local backbone)    & ${\sim}$4{,}500 (3 arms $\times$ 1{,}500 queries) \\
Stage 2 completions (per Pareto point)      & 1.28M ($T \cdot B \cdot G$ at $T{=}5000$, $B{=}32$, $G{=}8$) \\
Stage 2 input / output tokens (per point)   & About 384M / 82M \\
Stage 2 wall-clock (Qwen3-8B local, per Pareto point)      & 8 to 12 hours on 8-GPU node \\
Stage 2 local GPU-hours (per point)         & 64 to 96 GPU-hours \\
API calls issued (transfer + thinking)     & ${\sim}$22{,}000 \\
Distinct LLM backbones evaluated           & 8 \\
\quad Parameter range                      & 7B--671B \\
Benchmarks evaluated                       & 5 \\
Host $\times$ benchmark cells              & 40 \\
\bottomrule
\end{tabular}
\end{table}

The matched six-action budget trajectory separates GRPO refinement from action expansion. At 0, 2{,}500, and 5{,}000 updates, Qwen3-8B reaches 57.6, 58.9, and 59.4 macro-F1 with 0.255, 0.244, and 0.237k cached selected-answer tokens per query. Mistral-7B reaches 48.3, 49.5, and 50.0 macro-F1 with 0.272, 0.259, and 0.250k tokens. These improvements require the one-time training expenditure above; they do not account for Full FORGE's Fresh Online probes on later queries.

\subsection{Inference Latency}
\label{app:latency}

Table~\ref{tab:latency} isolates local feature and router computation. The frozen BGE encoder dominates this local component, with little MLP time, but the table does not include the host executions required to populate Full FORGE's 11 host-dependent features on a fresh query. Main-text Table~\ref{tab:online} reports end-to-end Fresh Online latency with those calls included.

\begin{table}[!htb]
\small
\renewcommand{\arraystretch}{1.06}
\centering
\caption{\textbf{Local routing latency (ms/query).} Median over 1{,}000 HotpotQA-style queries.}
\label{tab:latency}
\begin{threeparttable}
\small
\setlength{\tabcolsep}{4pt}
\begin{tabular}{@{}>{\raggedright\arraybackslash}p{0.46\linewidth}ccc@{}}
\toprule
\textbf{Component} & \makecell{\textbf{CPU}\\(1 thread)} & \makecell{\textbf{GPU}\\(batch 1)} & \makecell{\textbf{GPU}\\(batch 32, amort.)} \\
\midrule
BGE-base encoder (frozen)              & 27.4 & 4.9  & 0.39 \\
Structured feature extraction          &  0.6 & 0.6  & 0.05 \\
FORGE MLP head ($\pi^\mu$, $\pi^\theta$) &  1.1 & 0.05 & 0.01 \\
\midrule
\textbf{Total routing decision}        & \textbf{29.1} & \textbf{5.5} & \textbf{0.45} \\
\midrule
Self-consistency probe ($K_\mathrm{sc}{=}3$ host samples) & --- & --- & 90.0 \\
Host answer inference (Qwen3-8B, 64 out tokens) & --- & 122.5 & 78.4 \\
\bottomrule
\end{tabular}
\begin{tablenotes}[flushleft]
\footnotesize
\item Mean query length is 18 tokens. The total routing-decision row covers local computation only. On a fresh Full FORGE query, three self-consistency samples and one greedy probe require host calls; the self-consistency row is an amortized batch-32 component, not complete Fresh Online overhead. Host answer inference is shown for scale.
\end{tablenotes}
\end{threeparttable}
\end{table}

\noindent In the cached-feature setting, a previously processed query can reuse the four Full FORGE probe outputs, leaving only the local routing computation before the selected answer. A new query in Fresh Online use must acquire those features: one greedy probe and three self-consistency samples precede the routed answer, for five host calls in total. FORGE-Lite retrains on host-independent features and needs only one host call for the routed answer.

Full FORGE trades much higher latency for its additional F1 on these local hosts: Qwen3-8B's p50 increases from 122.5 to 271.4 ms relative to Always-Raw. Lite obtains a smaller F1 gain with p50 near Always-Raw. On the measured DeepSeek-V3.2 API example, zero-shot Lite obtains 63.1 F1, 0.182k tokens, and about 2.2 s end-to-end latency versus Always-Raw's 63.4 F1, 0.287k tokens, and about 2.1 s. This is a token-saving trade-off with slightly lower F1 and higher latency, not a three-axis Pareto improvement. Fresh Online Full FORGE latency was not measured on API hosts.

\paragraph{Cost-accounting note.}
Cached-result tables count selected-answer tokens and exclude earlier feature-acquisition calls, so they do not represent the end-to-end cost of a new query. Fresh Online evaluation also counts Full FORGE's greedy and self-consistency probes in tokens and latency; Lite has no host-dependent calls. Summary uses no auxiliary LLM call (Section~\ref{appendix:retrieval}), and its prompt tokens are included in both cost-accounting protocols.

\subsection{Reproducibility}
\label{appendix:repro}

Seed 42 identifies the canonical data split used for the point-estimate tables. The uncertainty analysis on Qwen3-8B and Mistral-7B instead evaluates three independent train/dev/test splits with three router-training seeds per split (nine runs per method and host). Each tunable baseline is retuned on its corresponding dev split with the same search budget. Nine-run standard deviations summarize split and initialization variation; paired-query bootstrap confidence intervals compare methods on the canonical test split and are not nine-run intervals.
The bootstrap resampling unit is the matched test query.
The \emph{Oracle (6-arm)} reference rows are enumerated references that use the observed outcomes of all six arms for each query. They are not deployable policies and are reported as descriptive references rather than F1 or EM upper bounds.
Exact reproduction requires model outputs, Stage-0 enumeration tables, Stage-1 warm-start checkpoints, and Stage-2 GRPO checkpoints in addition to the method settings reported here. Given pre-enumerated arm outputs, the reported Stage-1 supervised pipeline completes on one CPU in under two minutes. Stage-2 reproduction additionally requires the host endpoints used for rollout.

\paragraph{Hyperparameter selection.}
The Pareto utility's cost weights $(\lambda_\mathrm{in}, \lambda_\mathrm{out}) = (0.10, 0.20)$ and the Boltzmann temperature $\tau = 1.0$ are selected once on the Qwen3-8B HotpotQA dev set ($N{=}500$) and held fixed across the original host and benchmark comparisons. These weights specify a study operating point, not a universal price for user value. Table~\ref{tab:lambda} reports a canonical cached sensitivity grid; its F1 range is 1.2 points on the non-thinking host and 2.5 points on the thinking-capable host.
Baselines with tunable scalars receive the same dev-budget tuning protocol on the same dev split: the BM25-Threshold rule's two thresholds are grid-searched, Adaptive-RAG's complexity-classifier threshold is calibrated, and the Sysformer adapter rank is selected from $\{4, 8, 16\}$.
In cross-host transfer (Section~\ref{sec:transfer}), $\lambda$ and $\tau$ are not retuned per target host and no target-host Stage 2 training is run. Zero-shot describes the transferred weights, not the absence of Full FORGE's target-host feature probes on a fresh query; the API transfer tables use cached-feature accounting.

\subsection{Supplementary Figures}
\label{app:supp_figures}

This subsection collects the schematic of the three extrinsic axes of adaptive inference (Figure~\ref{fig:axes}).

\begin{figure}[!htb]
\centering
\includegraphics[width=0.85\linewidth]{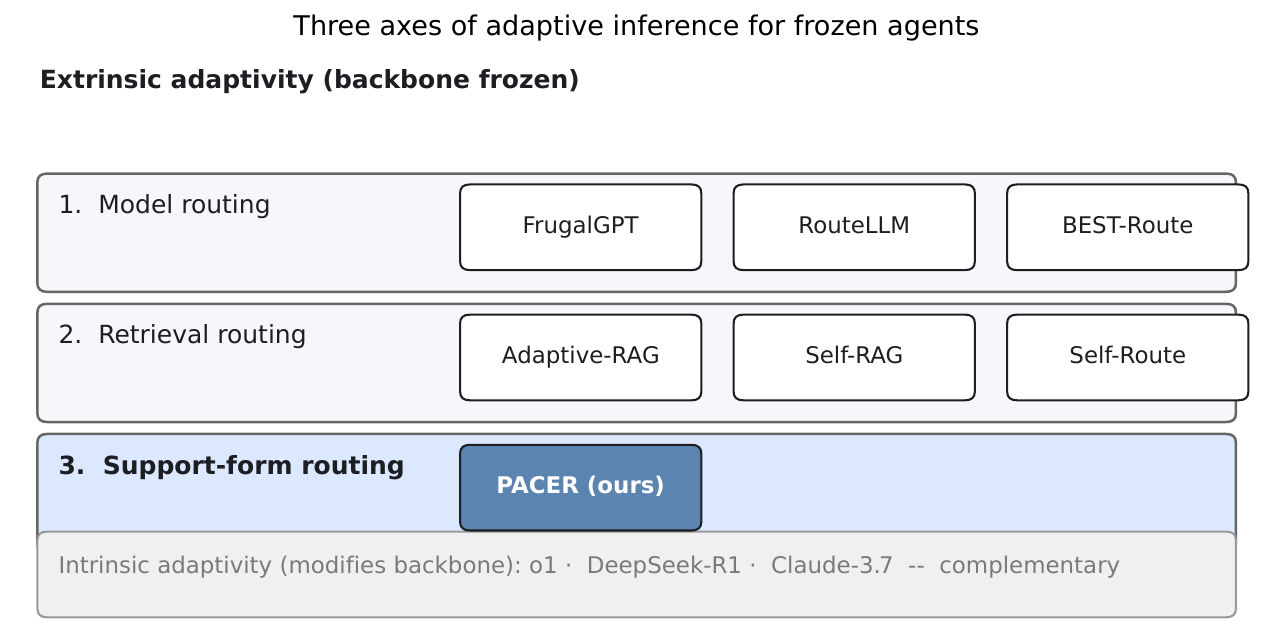}
\caption{\textbf{Three extrinsic axes of adaptive inference for frozen agents.} Model routing~\citep{chen2024frugalgpt, ding2025bestroute} and retrieval routing~\citep{jeong2024adaptiverag, asai2024selfrag} have been studied at length. Support-form routing, the third axis, is the focus of this paper. An orthogonal intrinsic axis exists on hosts that expose a reasoning budget~\citep{openai2024o1, deepseek2025r1}, and our framework composes with it where available.}
\label{fig:axes}
\end{figure}

\section{Additional Ablations}
\label{app:additional}

This appendix collects ablations referenced from the main text but deferred for space.

\subsection{Stage-2 Training and KL-Anchor Ablations}
\label{app:training_method}

Table~\ref{tab:training_ablation} puts the Stage-2 algorithm comparison and KL-anchor sweep on the same two benchmarks and cost scale. The shared three-arm Stage-1 row is a pipeline reference rather than a same-alphabet GRPO control; Panel A compares six-arm refinements, and Panel B varies the GRPO anchor around the default row in Panel A. The matched six-arm, nine-run comparison reported above isolates Stage 2 from alphabet expansion.
GRPO and Dr.\,GRPO differ by at most $0.2$ F1 in the displayed point estimates on the two benchmarks. Both exceed DPO and RLOO here, but this table alone does not establish a statistical tie or measure the matched six-action Stage-2 gain across all five tasks. That controlled comparison gives $+1.8$ and $+1.7$ macro-F1 on Qwen3-8B and Mistral-7B, respectively, at 1.28M model completions per operating point (Appendix~\ref{appendix:compute}).
GRPO is the reported primary configuration; the comparisons among DPO, RLOO, Dr.\,GRPO, and GRPO apply to this implementation and tested action alphabets.

\begin{table}[!htb]
\small
\renewcommand{\arraystretch}{1.06}
\caption{\textbf{Stage-2 refinement and KL-anchor ablations on Qwen3-8B.} Canonical Cached F1 (\%) and selected-answer token cost $\bar C$ (k) on HotpotQA and MuSiQue. Panel A compares six-arm refinement methods from the same Stage-1 checkpoint. Panel B varies $\beta$ for six-arm GRPO; the highlighted GRPO row in A is its default adaptive $\beta{=}0.05$ point. The three-arm Stage-1 row is shown once as a shared pipeline reference, not a matched six-arm $\beta{=}\infty$ limit.}
\label{tab:training_ablation}\label{tab:kl_ablation_app}
\centering
\setlength{\tabcolsep}{6pt}
\forgeTableFit{0.95\columnwidth}{%
\begin{tabular}{l cc cc c}
\toprule
\multirow{2}{*}{\textbf{Configuration}}
& \multicolumn{2}{c}{\texttt{HotpotQA}}
& \multicolumn{2}{c}{\texttt{MuSiQue}}
& \multirow{2}{*}{\textbf{Avg F1}\,$\uparrow$} \\
\cmidrule(lr){2-3}\cmidrule(lr){4-5}
 & F1\,$\uparrow$ & $\bar{C}$\,$\downarrow$ & F1\,$\uparrow$ & $\bar{C}$\,$\downarrow$ & \\
\midrule
\multicolumn{6}{l}{\textit{Shared three-arm pipeline reference}} \\
Offline KL only (FORGE (Stage 1))                                & 58.3          & 0.272 & 20.8          & 0.295 & 39.6 \\
\midrule
\multicolumn{6}{l}{\textit{A. Stage-2 algorithm (six-arm alphabet)}} \\
\textcolor{natureteal}{DPO}~\citep{rafailov2023dpo}              & 59.5          & 0.258 & 25.0          & 0.265 & 42.3 \\
\textcolor{natureteal}{RLOO}~\citep{ahmadian2024rloo}            & 59.2          & 0.263 & 24.6          & 0.272 & 41.9 \\
\textcolor{natureteal}{Dr.\,GRPO}~\citep{liu2025drgrpo}          & 60.6          & 0.235 & 26.5          & 0.249 & 43.6 \\
\rowcolor{lightblue!60} \textcolor{naturemagenta}{\textbf{GRPO ($\beta{=}0.05$, default)}} & \textbf{60.4} & \textbf{0.236} & \textbf{26.4} & \textbf{0.252} & \textbf{\fbox{43.4}} \\
\midrule
\multicolumn{6}{l}{\textit{B. GRPO KL-anchor coefficient $\beta$ (six-arm alphabet)}} \\
$\beta = 0$ (no anchor)                    & 56.2 & 0.412 & 22.1 & 0.398 & 39.2 \\
$\beta = 0.01$ (weak)                      & 59.6 & 0.252 & 25.5 & 0.265 & 42.6 \\
$\beta = 0.20$ (strong)                    & 59.1 & 0.265 & 24.7 & 0.282 & 41.9 \\
\bottomrule
\end{tabular}%
}
\end{table}

\subsection{Factorized versus Flat Policy Head}
\label{app:factorization}

Table~\ref{tab:factorization} compares the default factorized head $\pi_\theta(a \mid x) = \pi_\theta^\mu(\mu \mid h(x)) \cdot \pi_\theta^\theta(\theta \mid h(x), \mu)$ against a flat $K$-way categorical head over the joint action $a = (\mu, \theta)$, with feature vector and Stage-2 hyperparameters held fixed.
At $K{=}6$ the displayed factorized-versus-flat difference is $0.6$ F1; at $K{=}12$ it is $1.6$ F1, with the reported parameter counts 269K versus 273K. These two tested settings are consistent with a benefit from conditioning thinking depth on support form, but do not establish a general scaling law in $K$.
All main-paper results use the factorized head.

\begin{table}[!htb]
\small
\renewcommand{\arraystretch}{1.06}
\caption{\textbf{Factorized vs flat policy head.} FORGE F1 (\%), average cost $\bar C$ (k), and policy parameter count under two architectures: a flat $K$-way categorical head over the joint action $a = (\mu, \theta)$, and the default factorized head $\pi^\mu(\mu \mid h(x)) \cdot \pi^\theta(\theta \mid h(x), \mu)$ (Section~\ref{sec:router}). Reported on \texttt{Qwen3-8B} ($K{=}6$) and \texttt{Qwen3-8B-Thinking} ($K{=}12$).}
\centering
\setlength{\tabcolsep}{6pt}
\forgeTableFit{0.95\columnwidth}{%
\begin{tabular}{l cc cc c}
\toprule
\multirow{2}{*}{\textbf{Policy head}}
& \multicolumn{2}{c}{\texttt{Qwen3-8B} ($K{=}6$)}
& \multicolumn{2}{c}{\texttt{Qwen3-8B-Thinking} ($K{=}12$)}
& \multirow{2}{*}{\textbf{\# params}} \\
\cmidrule(lr){2-3}\cmidrule(lr){4-5}
 & F1\,$\uparrow$ & $\bar{C}$\,$\downarrow$ & F1\,$\uparrow$ & $\bar{C}$\,$\downarrow$ & \\
\midrule
Flat $K$-way categorical                       & 59.8 & 0.241 & 63.2 & 0.715 & 273K \\
\rowcolor{lightblue!60} Factorized $\pi^\mu \cdot \pi^{\theta \mid \mu}$ (default) & \textbf{60.4} & \textbf{0.236} & \textbf{64.8} & \textbf{0.690} & \textbf{269K} \\
\bottomrule
\end{tabular}%
}
\label{tab:factorization}
\end{table}

\begin{table}[!htb]
\small
\renewcommand{\arraystretch}{1.06}
\caption{\textbf{Canonical cached sensitivity to cost weights $\lambda_\mathrm{in}$ and $\lambda_\mathrm{out}$.} FORGE F1 (\%) and selected-answer token cost $\bar C$ (k) on \texttt{HotpotQA} for non-thinking Qwen3-8B and thinking-capable Qwen3-8B-Thinking. Across the surveyed points, F1 spans 1.2 and 2.5 points, respectively. The study default $(0.1, 0.2)$ is highlighted; these weights are not universal prices.}
\label{tab:lambda}
\centering
\setlength{\tabcolsep}{6pt}
\forgeTableFit{0.8\columnwidth}{%
\begin{tabular}{cc cc cc}
\toprule
\multirow{2}{*}{$\lambda_\mathrm{in}$} & \multirow{2}{*}{$\lambda_\mathrm{out}$}
& \multicolumn{2}{c}{Non-thinking host}
& \multicolumn{2}{c}{Thinking-capable host} \\
\cmidrule(lr){3-4}\cmidrule(lr){5-6}
& & F1\,$\uparrow$ & $\bar{C}$\,$\downarrow$ & F1\,$\uparrow$ & $\bar{C}$\,$\downarrow$ \\
\midrule
0.05 & 0.0  & 60.8 & 0.262 & 65.5 & 0.852 \\
0.05 & 0.2  & 60.7 & 0.246 & 65.0 & 0.731 \\
0.10 & 0.0  & 60.4 & 0.253 & 65.2 & 0.802 \\
\rowcolor{lightblue!60} 0.10 & 0.20 (default) & \textbf{60.4} & \textbf{0.236} & \textbf{64.8} & \textbf{0.690} \\
0.10 & 0.50 & 59.9 & 0.218 & 63.5 & 0.582 \\
0.20 & 0.20 & 59.6 & 0.210 & 63.0 & 0.624 \\
\bottomrule
\end{tabular}%
}
\end{table}

\noindent Table~\ref{tab:lambda} shows a quality--cost trade-off rather than invariance to the cost weights: the non-thinking host spans 1.2 F1 points and the thinking-capable host spans 2.5 points across the displayed grid. The table uses cached selected-answer tokens; the Fresh Online token curves also include feature acquisition.
Increasing $\lambda_\mathrm{in}$ shifts the policy toward cheaper input arms (Direct over Raw), and increasing $\lambda_\mathrm{out}$ specifically suppresses thinking-budget arms on thinking-capable hosts.
The default of $(0.1, 0.2)$ was selected on a held-out development set and held fixed for all main results.

\begin{table}[H]
\small
\renewcommand{\arraystretch}{1.06}
\caption{\textbf{Training-size sweep.} HotpotQA F1 (\%) for Stage 1 and Stage 1+2 on Qwen3-8B.}
\label{tab:data_size}
\centering
\small
\setlength{\tabcolsep}{10pt}
\begin{tabular}{@{}ccc@{}}
\toprule
$N_\mathrm{warm}$ & FORGE (Stage 1) & FORGE (Stage 1+2) \\
\midrule
200    & 46.8 & 50.6 \\
500    & \textbf{58.5} & \textbf{60.2} \\
1{,}000  & 58.2 & 60.4 \\
\rowcolor{lightblue!60} 1{,}500 (default) & \textbf{58.3} & \textbf{60.4} \\
2{,}000  & 58.4 & 60.5 \\
\bottomrule
\end{tabular}
\end{table}

\noindent Table~\ref{tab:data_size} shows empirical Stage-1 saturation around $N_\mathrm{warm} = 500$ on this HotpotQA sweep; no finite-sample theorem here predicts that threshold.
The Stage 1-to-Stage 1+2 difference ranges from $+1.7$ to $+3.8$ F1 across the displayed training sizes, and is $+2.1$ at the default $N_\mathrm{warm} = 1{,}500$. This single-benchmark sweep does not isolate GRPO from action-alphabet changes in the main three-arm-to-six-arm comparison.
Stage 2 needs rollout calls but no extra warm-start data.

\subsection{Per-Arm Allocation Changes from Three to Six Actions}
\label{app:perarm_breakdown}

The original three-arm Stage-1 to six-arm final-policy comparison differs in both action availability and optimization; its $+3.5$ Qwen3-8B and $+3.6$ Mistral-7B macro-F1 differences are not GRPO-only effects. The matched six-action, nine-run comparison isolates the Stage-2 increment at $+1.8$ [1.3, 2.3] and $+1.7$ [1.2, 2.2] macro-F1, respectively (paired 95\% confidence intervals on the canonical test split).
Table~\ref{tab:perarm_breakdown} describes the canonical three-arm-to-six-arm allocation changes on Qwen3-8B. Its per-arm terms are descriptive accounting terms, computed as $\Delta\mathrm{alloc}_a \cdot (m_a^\mathrm{F1} - m_\mathrm{ref}^\mathrm{F1})$ for the selected query subsets; they should not be interpreted as controlled causal effects of GRPO.
Here CoT-Prompt aggregates the three CoT-Prompt arms across support forms.

\begin{table}[!htb]
\small
\renewcommand{\arraystretch}{1.06}
\caption{\textbf{Descriptive per-arm accounting for Qwen3-8B's three-arm Stage-1 to six-arm final-policy comparison.} Allocations sum to $100\%$ within each stage, and the displayed terms sum (up to rounding) to the observed $+3.5$ macro-F1 difference. The action alphabet also changes, so this is not the controlled GRPO-only estimate; Mistral-7B's aggregate difference is $+3.6$.}
\label{tab:perarm_breakdown}
\centering
\small
\setlength{\tabcolsep}{6pt}
\renewcommand{\arraystretch}{0.95}
\forgeTableFit{0.95\columnwidth}{%
\begin{tabular}{lcccc}
\toprule
Arm $a$ & Stage-1 alloc.\ (\%) & Final alloc.\ (\%) & $\Delta$ alloc.\ (pp) & Descriptive F1 term \\
\midrule
Direct           & 13 & 18 & $+5$  & $+0.4$ \\
Summary          & 27 & 32 & $+5$  & $+0.9$ \\
Raw              & 60 & 35 & $-25$ & $-0.7$ \\
CoT-Prompt       &  0 & 15 & $+15$ & $+2.9$ \\
\midrule
\textbf{Total}   & 100 & 100 & --- & \textbf{+3.5} \\
\bottomrule
\end{tabular}%
}
\end{table}

\noindent The largest descriptive term is the $+2.9$ associated with CoT-Prompt, which the three-arm warm-start policy cannot select. Direct and Summary shifts contribute $+1.3$ combined, while the Raw term is $-0.7$ in this accounting. These terms describe the changed allocations and query subsets; they do not partition the effect of GRPO from that of adding CoT-Prompt. The matched six-arm comparison above provides that controlled Stage-2 estimate.

\subsection{Router Behavior in BGE Embedding Space}
\label{app:router_umap}

This subsection describes routing in BGE embedding space (Figure~\ref{fig:router_umap}), and against the per-instance utility oracle (Figure~\ref{fig:router_oracle_vs_forge}). These plots show action allocation without identifying the cause of each answer. In these figures, CoT-Prompt aggregates the CoT-Prompt arms across support forms, and Direct, Summary, and Raw denote the corresponding NoThink arms.

\paragraph{Per-point structure in BGE embedding space.}
Figure~\ref{fig:router_umap} plots a 2D UMAP projection of the BGE-base sentence embeddings of the same $N{=}1{,}500$ held-out test queries, colouring each point by FORGE's argmax action.
Five visible cluster regions correspond to five query types derived from dataset-provided labels (\emph{Comparison}, \emph{Factoid}, \emph{Multi-hop 2--3 hops}, \emph{Multi-hop 4+ hops}, \emph{Verification}); they emerge from the BGE embedding alone, before the router sees them.
Most clusters are dominated by a single arm, and mixed colours concentrate near cluster boundaries.
The regions are consistent with query-dependent routing in the displayed BGE projection, but the visualization alone cannot rule out memorization or establish why an individual action succeeds.

\begin{figure}[!htb]
\centering
\includegraphics[width=0.85\linewidth]{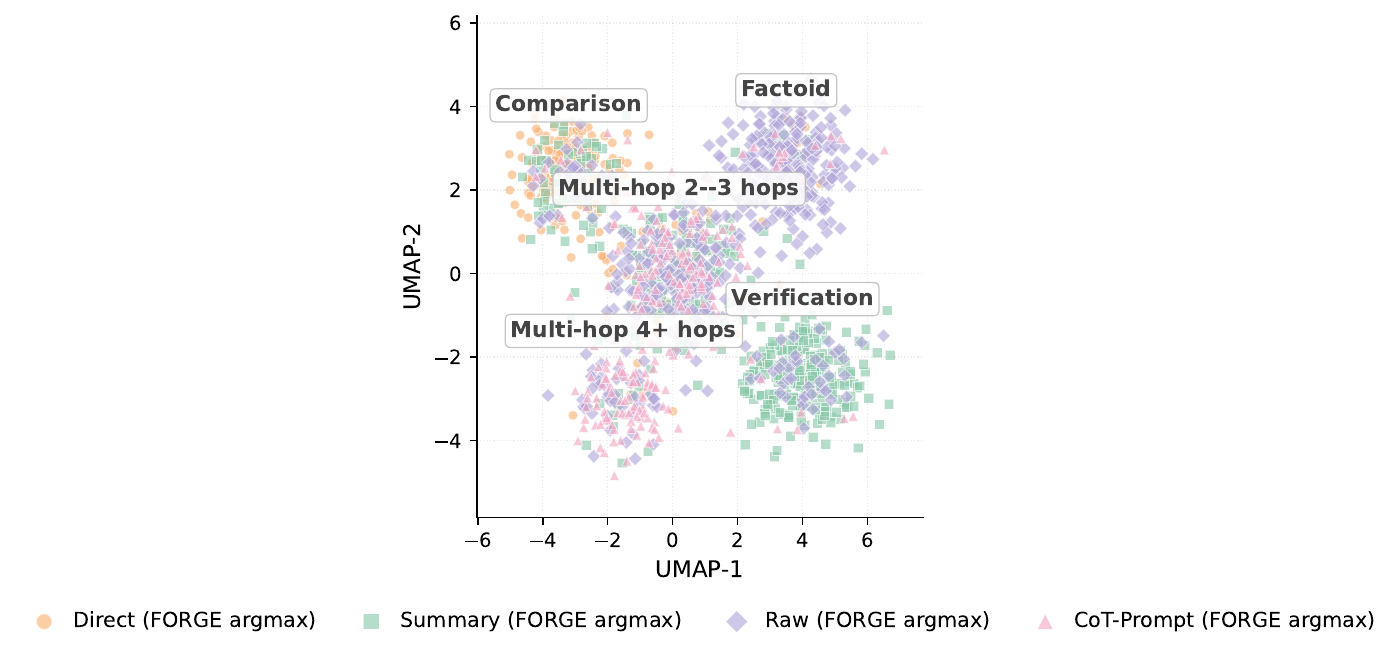}
\caption{\textbf{2D UMAP projection of held-out BGE query embeddings ($N{=}1{,}500$), coloured by FORGE's argmax action on Qwen3-8B.} The five visible clusters correspond to five query types derived from dataset labels (annotated near each cluster centroid). UMAP uses frozen BGE-base query embeddings, $n_\mathrm{neighbors}{=}30$, $\mathrm{min\_dist}{=}0.3$, and cosine distance; centroid labels are illustrative.}
\label{fig:router_umap}
\end{figure}

\paragraph{Alignment with per-instance oracle.}
Figure~\ref{fig:router_umap} shows that FORGE's routing is structured in embedding space but not yet that the structure is \emph{correct}.
Figure~\ref{fig:router_oracle_vs_forge} places FORGE's routing beside the per-instance \emph{utility oracle} on the same UMAP projection. This diagnostic oracle evaluates all six actions and selects the one maximizing the stated F1-minus-token-cost utility; it is separate from the three-arm Stage-0 warm-start enumeration and from the tabular Oracle reference rows. FORGE selects its answer action without enumerating all arms, although Full FORGE still needs host-dependent feature probes on a fresh query.
The two panels are nearly indistinguishable: \textbf{FORGE matches the per-instance oracle on $87\%$ of queries}, with the residual $13\%$ disagreement concentrated at the boundaries between clusters where multiple arms are near-tied in Pareto utility (visible as the small fraction of mismatched-colour points within each cluster, especially on the multi-hop $2$--$3$-hop region where Raw and CoT-Prompt are near-tied for many queries).
The $87\%$ agreement is with the specified utility oracle; the UMAP plot does not explain the outcome of individual answers.

\begin{figure}[!htb]
\centering
\includegraphics[width=0.95\linewidth]{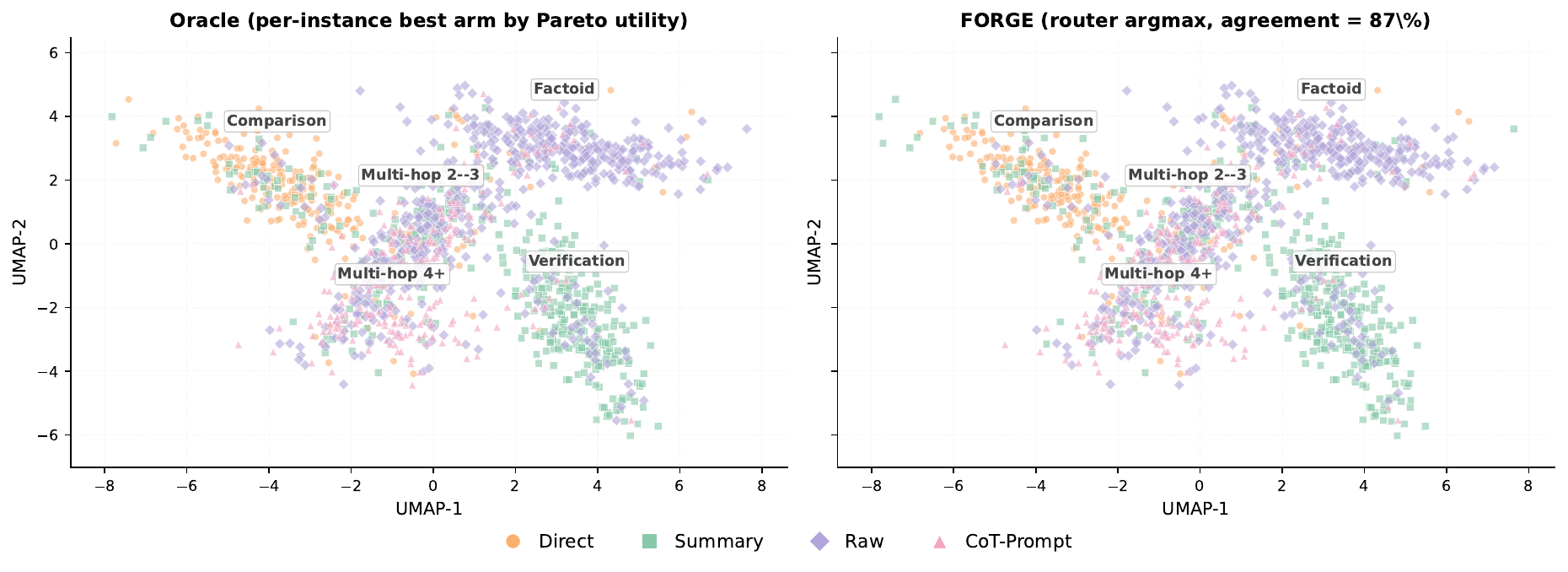}
\caption{\textbf{FORGE's argmax action agrees with the six-arm utility oracle on $87\%$ of the same queries ($N{=}1{,}500$, Qwen3-8B).} \emph{Left:} the arm maximizing the stated F1-minus-token-cost utility after offline six-arm evaluation; this diagnostic is separate from the tabular Oracle reference rows and from three-arm Stage 0. \emph{Right:} the trained router's argmax without answer-arm enumeration. Coordinates and cluster centroids are shared. The residual $13\%$ action disagreement is concentrated near the displayed multi-hop cluster boundary; this plot does not measure Fresh Online probe cost.}
\label{fig:router_oracle_vs_forge}
\end{figure}

\subsection{Cross-Host Transfer: Per-Task Breakdown}
\label{app:nim_transfer_full}

Table~\ref{tab:nim_transfer} reports macro F1 for cross-host transfer; Table~\ref{tab:nim_transfer_full} adds per-task F1, Always-Direct (the parametric-only reference), and FORGE Stage 1 (the supervised warm-start ablation).

\begin{table}[!htbp]
\small
\renewcommand{\arraystretch}{1.06}
\centering
\caption{\textbf{Zero-shot cross-host transfer: per-task F1 and cached cost.} A source-trained FORGE router is applied to three API hosts without target-host weight updates.}
\label{tab:nim_transfer_full}
\begin{threeparttable}
\small
\setlength{\tabcolsep}{3pt}
\renewcommand{\arraystretch}{1.03}
\begin{tabular}{@{}llccccc c c c@{}}
\toprule
\multirow{2}{*}{Target host} & \multirow{2}{*}{Policy}
  & \multicolumn{5}{c}{F1 by task (\%)}
  & \multirow{2}{*}{\textbf{Avg F1}\,$\uparrow$}
  & \multirow{2}{*}{\makecell{$\bar C$ (k)\,$\downarrow$\\saving}}
  & \multirow{2}{*}{$\Delta_{\mathrm{Raw}}^{F_1}$} \\
\cmidrule(lr){3-7}
 & & HQA & 2Wiki & MuSiQue & PopQA & FEVER & & & \\
\midrule
\multirow{6}{*}{\makecell[l]{\texttt{Llama-3.3}\\\texttt{70B}}}
 & Direct  & 42.9 & 32.4 & 13.4 & 38.5 & 61.3 & 37.7 & \makecell{0.085\\\textcolor{natureteal}{($-$73\%)}} & -20.8 \\
 & Summary & 57.6 & 37.1 & 20.5 & 90.3 & 74.7 & 56.0 & \makecell{0.214\\\textcolor{natureteal}{($-$33\%)}} & -2.5 \\
 & Raw     & \textbf{64.7} & 41.5 & 26.9 & 88.8 & 70.7 & 58.5 & \makecell{0.318\\\textcolor{gray}{(ref.)}} & +0.0 \\
\cmidrule(lr){2-10}
 & \emph{Oracle} & \emph{73.8} & \emph{54.7} & \emph{36.3} & \emph{90.8} & \emph{76.0} & \emph{66.3} & \makecell{\emph{0.145}\\\textcolor{natureteal}{($-$54\%)}} & --- \\
\cmidrule(lr){2-10}
 & Stage 1 & 61.9 & 45.3 & 24.1 & 90.3 & 72.0 & 58.7\,$\pm$\,0.4 & \makecell{0.215\\\textcolor{natureteal}{($-$32\%)}} & +0.2 \\
\rowcolor{lightblue!60}\cellcolor{white} & \textbf{FORGE} & 64.0 & \textbf{47.8} & \textbf{27.2} & \textbf{91.5} & \textbf{75.5} & \textbf{\fbox{61.2\,$\pm$\,0.6}} & \makecell{0.205\\\textcolor{natureteal}{($-$36\%)}} & +2.7 \\
\midrule
\multirow{6}{*}{\makecell[l]{\texttt{DeepSeek}\\\texttt{V3.2}}}
 & Direct  & 48.2 & 29.2 & 22.5 & 39.6 & 49.3 & 37.8 & \makecell{0.055\\\textcolor{natureteal}{($-$81\%)}} & -25.6 \\
 & Summary & 58.4 & 43.3 & 28.3 & \textbf{94.8} & \textbf{88.0} & 62.6 & \makecell{0.184\\\textcolor{natureteal}{($-$36\%)}} & -0.8 \\
 & Raw     & \textbf{65.6} & 42.8 & 31.8 & 91.3 & 85.3 & \textbf{63.4} & \makecell{0.287\\\textcolor{gray}{(ref.)}} & +0.0 \\
\cmidrule(lr){2-10}
 & \emph{Oracle} & \emph{72.2} & \emph{61.0} & \emph{44.4} & \emph{96.1} & \emph{96.0} & \emph{73.9} & \makecell{\emph{0.116}\\\textcolor{natureteal}{($-$60\%)}} & --- \\
\cmidrule(lr){2-10}
 & Stage 1 & 63.3 & 42.8 & 30.7 & 92.9 & 72.0 & 60.3\,$\pm$\,0.4 & \makecell{0.183\\\textcolor{natureteal}{($-$36\%)}} & -3.1 \\
\rowcolor{lightblue!60}\cellcolor{white} & \textbf{FORGE} & 65.4 & \textbf{45.5} & \textbf{33.5} & 94.0 & 78.5 & \textbf{\fbox{63.4\,$\pm$\,0.7}} & \makecell{0.176\\\textcolor{natureteal}{($-$39\%)}} & +0.0 \\
\midrule
\multirow{6}{*}{\makecell[l]{\texttt{Qwen3.5}\\\texttt{397B}}}
 & Direct  & 54.0 & 25.0 & 19.1 & 38.9 & 66.7 & 40.7 & \makecell{0.062\\\textcolor{natureteal}{($-$80\%)}} & -22.4 \\
 & Summary & 58.7 & 44.8 & 23.4 & \textbf{95.6} & \textbf{88.0} & 62.1 & \makecell{0.196\\\textcolor{natureteal}{($-$36\%)}} & -1.0 \\
 & Raw     & 65.0 & 41.0 & 29.8 & 93.0 & 86.7 & 63.1 & \makecell{0.305\\\textcolor{gray}{(ref.)}} & +0.0 \\
\cmidrule(lr){2-10}
 & \emph{Oracle} & \emph{74.2} & \emph{56.0} & \emph{40.2} & \emph{95.6} & \emph{93.3} & \emph{71.9} & \makecell{\emph{0.123}\\\textcolor{natureteal}{($-$60\%)}} & --- \\
\cmidrule(lr){2-10}
 & Stage 1 & 63.1 & 46.5 & 30.7 & 94.5 & 82.7 & 63.5\,$\pm$\,0.4 & \makecell{0.209\\\textcolor{natureteal}{($-$31\%)}} & +0.4 \\
\rowcolor{lightblue!60}\cellcolor{white} & \textbf{FORGE} & \textbf{65.5} & \textbf{49.0} & \textbf{33.0} & 94.5 & 86.0 & \textbf{\fbox{65.6\,$\pm$\,0.5}} & \makecell{0.198\\\textcolor{natureteal}{($-$35\%)}} & +2.5 \\
\bottomrule
\end{tabular}
\begin{tablenotes}[flushleft]
\footnotesize
\item HQA abbreviates HotpotQA. Direct, Summary, and Raw are fixed policies. Each task has 75 canonical test queries. $\bar C$ counts selected-answer tokens (k/query); its second line is the reduction relative to Raw. $\Delta_{\mathrm{Raw}}^{F_1}$ compares macro F1 with Raw. Stage 1 and FORGE macro F1 are mean $\pm$ SD over three seeds on the fixed subset; per-task F1 and baseline scores are point estimates. The italic Oracle is an enumerated six-arm reference (Appendix~\ref{appendix:repro}) and is excluded from practical-policy rankings. Bold marks the highest practical per-task score; boxed macro F1 marks FORGE, which ties Raw on DeepSeek-V3.2 at displayed precision. The router uses 2{,}000 source training and 500 development queries. Pre-routing host calls are excluded from cached costs.
\end{tablenotes}
\end{threeparttable}
\end{table}

\subsection{Exact Match (EM) Scores}
\label{app:em_main}

The main paper reports token-level F1 throughout for direct comparability with prior routing work.
Table~\ref{tab:em_main} reports EM on the same canonical cached predictions as Table~\ref{tab:main}. Its \emph{Oracle (6-arm)} row is the enumerated reference of Table~\ref{tab:main} scored under EM (Appendix~\ref{appendix:repro}); it is a descriptive reference, not an EM upper bound. The relative ordering is largely preserved, with FORGE leading the non-oracle policies in macro EM on both hosts.
EM/F1 ratios are roughly $0.78$ on HotpotQA, $0.82$ on 2Wiki, $0.65$ on MuSiQue (multi-hop tolerates partial overlap), $0.87$ on PopQA, and $0.96$ on FEVER (three-way classification).

\begin{table}[H]
\small
\renewcommand{\arraystretch}{1.06}
\centering
\caption{\textbf{Exact Match on the canonical cached predictions of Table~\ref{tab:main}.} Scores are percentages.}
\label{tab:em_main}
\begin{threeparttable}
\small
\setlength{\tabcolsep}{4pt}
\renewcommand{\arraystretch}{1.05}
\begin{tabular}{@{}llccccc c@{}}
\toprule
\multirow{2}{*}{Host} & \multirow{2}{*}{Policy}
 & \multicolumn{5}{c}{EM by task (\%)}
 & \multirow{2}{*}{\textbf{Avg EM}\,$\uparrow$} \\
\cmidrule(lr){3-7}
 & & HQA & 2Wiki & MuSiQue & PopQA & FEVER & \\
\midrule
\multirow{10}{*}{\texttt{Qwen3-8B}}
 & Direct       & 18.5 & 21.2 &  6.1 & 12.8 & 52.0 & 22.1 \\
 & Summary      & 36.5 & 31.0 & 12.2 & 75.5 & \textbf{78.0} & 46.6 \\
 & Raw          & 45.0 & 33.5 & 15.7 & 73.0 & 76.0 & 48.6 \\
\cmidrule(lr){2-8}
 & \emph{Oracle (6-arm)} & \emph{53.5} & \emph{43.8} & \emph{22.6} & \emph{76.5} & \emph{86.0} & \emph{56.5} \\
\cmidrule(lr){2-8}
 & Adaptive-RAG & 44.0 & 33.0 & 15.0 & 69.5 & 74.5 & 47.2 \\
 & TierMem-2arm & 37.0 & 32.5 & 13.5 & 71.0 & \textbf{78.0} & 46.4 \\
 & s3           & 43.0 & 32.0 & 14.0 & 68.0 & 72.0 & 45.8 \\
 & Sysformer    & 45.0 & 32.0 & 13.7 & 75.0 & 73.0 & 47.7 \\
 & Stage 1      & 45.5 & 31.5 & 13.5 & 76.0 & 72.5 & 47.8 \\
\rowcolor{lightblue!60}\cellcolor{white} & \textbf{FORGE} & \textbf{47.5} & \textbf{35.0} & \textbf{17.2} & \textbf{76.5} & 77.0 & \textbf{\fbox{50.6}} \\
\midrule
\multirow{10}{*}{\texttt{Mistral-7B}}
 & Direct       & 17.5 & 14.3 &  4.7 & 19.5 & 45.0 & 20.2 \\
 & Summary      & 29.5 & 19.8 &  7.0 & 69.5 & 66.5 & 38.5 \\
 & Raw          & 36.5 & 23.5 &  8.4 & 65.0 & 69.0 & 40.5 \\
\cmidrule(lr){2-8}
 & \emph{Oracle (6-arm)} & \emph{47.0} & \emph{35.0} & \emph{14.0} & \emph{72.5} & \emph{78.5} & \emph{49.4} \\
\cmidrule(lr){2-8}
 & Adaptive-RAG & 35.5 & 22.0 &  9.0 & 63.5 & 67.0 & 39.4 \\
 & TierMem-2arm & 31.5 & 23.0 &  7.7 & 61.0 & 67.0 & 38.0 \\
 & s3           & 35.5 & 21.5 &  8.3 & 61.5 & 64.0 & 38.2 \\
 & Sysformer    & 37.0 & 19.0 &  8.5 & 65.0 & 66.0 & 39.1 \\
 & Stage 1      & 37.5 & 19.0 &  8.7 & 66.5 & 67.0 & 39.7 \\
\rowcolor{lightblue!60}\cellcolor{white} & \textbf{FORGE} & \textbf{39.5} & \textbf{25.0} & \textbf{10.0} & \textbf{70.0} & \textbf{69.5} & \textbf{\fbox{42.8}} \\
\bottomrule
\end{tabular}
\begin{tablenotes}[flushleft]
\footnotesize
\item HQA abbreviates HotpotQA. Direct, Summary, and Raw are fixed policies. Adaptive-RAG~\citep{jeong2024adaptiverag}, TierMem-2arm~\citep{zhu2026tiermem}, s3~\citep{jiang2025s3}, and Sysformer~\citep{sysformer2025} are the comparison routers. The italic Oracle is the enumerated six-arm reference (Appendix~\ref{appendix:repro}) and is excluded from the practical-policy ranking. Bold marks the best practical per-task EM (including ties); boxed macro EM marks the best practical policy on each host.
\end{tablenotes}
\end{threeparttable}
\end{table}

\subsection{Numerical Table for the Action Alphabet Sweep}
\label{app:fig4_tables}

The KL-anchor values appear alongside the Stage-2 algorithm comparison in Table~\ref{tab:kl_ablation_app}, Panel B, with the default point in Panel A. Table~\ref{tab:action_size_app} reports the action alphabet sweep. Adding CoT-Prompt ($K{=}3 \to 6$) raises F1 while lowering cost, adding Think-Low ($K{=}6 \to 9$) gives modest gains ($+1.0$ and $+1.1$ F1), and unlocking Think-High ($K{=}9 \to 12$) gives the largest absolute improvement.

\begin{table}[!htb]
\small
\renewcommand{\arraystretch}{1.06}
\caption{\textbf{Action alphabet size ablation.} FORGE F1 (\%) and cached selected-answer tokens $\bar C$ (k) on \texttt{HotpotQA} and \texttt{MuSiQue} as the alphabet grows from $K{=}3$ to $K{=}12$ on \texttt{Qwen3-8B-Thinking}.}
\label{tab:action_size_app}
\centering
\setlength{\tabcolsep}{6pt}
\forgeTableFit{0.85\columnwidth}{%
\begin{tabular}{c l cc cc}
\toprule
\multirow{2}{*}{$K$} & \multirow{2}{*}{Alphabet}
& \multicolumn{2}{c}{\texttt{HotpotQA}}
& \multicolumn{2}{c}{\texttt{MuSiQue}} \\
\cmidrule(lr){3-4}\cmidrule(lr){5-6}
 & & F1\,$\uparrow$ & $\bar{C}$\,$\downarrow$ & F1\,$\uparrow$ & $\bar{C}$\,$\downarrow$ \\
\midrule
3  & $\{$Direct, Summary, Raw$\} \times \{$NoThink$\}$  & 58.5 & 0.272 & 21.3 & 0.295 \\
6  & $\mathcal{A}_\mathrm{nt}$ (default non-thinking)    & 60.4 & 0.236 & 26.4 & 0.252 \\
9  & $\mathcal{A}_\mathrm{nt} \cup \mathrm{Think\text{-}Low}$ arms  & 61.4 & 0.380 & 27.5 & 0.420 \\
\rowcolor{lightblue!60} 12 & $\mathcal{A}_\mathrm{t}$ (full 12-arm)  & \textbf{64.8} & 0.690 & \textbf{33.5} & 0.760 \\
\bottomrule
\end{tabular}%
}
\end{table}

\subsection{Heuristic Routing Baselines}
\label{app:heuristic_baselines}

Two heuristic routing baselines were excluded from the main results in Table~\ref{tab:main} for compactness.
\textbf{Self-Routing} uses the host's own confidence (a normalized log-probability of the Direct response) thresholded at the dev-tuned value to decide between Direct and Raw.
\textbf{FrugalGPT-Cascade}~\citep{chen2024frugalgpt} runs a Direct $\to$ Summary $\to$ Raw threshold cascade, advancing whenever the previous arm's confidence falls below a dev-tuned threshold.
Both favor inexpensive actions in the reported runs, but they do not have identical behavior: Self-Routing improves Mistral-7B macro F1 from 24.2 to 28.2 in Table~\ref{tab:heuristic_baselines}. Its initial Direct confidence call, and the earlier calls in the cascade, must be included in any Fresh Online cost or latency comparison.

\begin{table}[!htb]
\small
\renewcommand{\arraystretch}{1.06}
\caption{\textbf{Heuristic routing baselines on the main hosts.} F1 (\%) and cached selected-answer token cost $\bar C$ (k) on the five benchmarks of Table~\ref{tab:main}. Cached $\bar C$ omits preliminary Direct/confidence and cascade calls; thus this is not a Fresh Online comparison. Always-Direct is a reference row.}
\label{tab:heuristic_baselines}
\centering
\setlength{\tabcolsep}{4pt}
\forgeTableFit{0.95\columnwidth}{%
\begin{tabular}{ll cccccc cc}
\toprule
Host & Method & \texttt{HotpotQA} & \texttt{2Wiki} & \texttt{MuSiQue} & \texttt{PopQA} & \texttt{FEVER} & Avg F1 & $\bar C$ & $\Delta_{\mathrm{Raw}}^{F_1}$ \\
\midrule
\multirow{3}{*}{\texttt{Qwen3-8B}}
 & Always-Direct          & 26.2 & 25.7 & 10.2 & 15.0 & 54.4 & 26.3 & 0.040 & -30.8 \\
 & Self-Routing           & 27.0 & 27.6 & 11.1 & 18.2 & 56.1 & 28.0 & 0.058 & -29.1 \\
 & FrugalGPT-Cascade      & 26.2 & 25.7 & 10.2 & 16.5 & 55.0 & 26.7 & 0.042 & -30.4 \\
\midrule
\multirow{3}{*}{\texttt{Mistral-7B}}
 & Always-Direct          & 25.2 & 17.7 &  7.9 & 23.0 & 47.0 & 24.2 & 0.041 & -23.0 \\
 & Self-Routing           & 32.4 & 24.2 & 11.1 & 24.0 & 49.5 & 28.2 & 0.060 & -19.0 \\
 & FrugalGPT-Cascade      & 25.3 & 17.9 &  8.0 & 23.5 & 47.5 & 24.4 & 0.043 & -22.8 \\
\bottomrule
\end{tabular}%
}
\end{table}

\section{Properties of the Routing Objective and Warm Start}
\label{app:policy_properties}
\label{app:regret}

These results characterize the finite-action offline target and show how the
warm-start reference shapes it; they give no convergence or regret guarantee
for the implemented optimizer.

\subsection{Value of Per-Query Routing}
\label{app:background_props}

\begin{proposition}[Value of per-query routing]
\label{prop:routing}
For integrable utilities, define $U^*(x)=\max_{a\in\mathcal A}U(x,a)$ as the best achievable per-query utility. Then
\[
\mathbb E_x[U^*(x)]\ \geq\ \max_{a\in\mathcal A}\mathbb E_x[U(x,a)].
\]
It is strict if every action $a$ is suboptimal on a set of queries with positive probability.
\end{proposition}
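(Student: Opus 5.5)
The plan is a pointwise domination argument followed by integration, with strictness obtained by showing that the pointwise gap is positive on a set of positive probability.

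\emph{Weak inequality.} Fix any action $a\in\mathcal A$. By definition of $U^*$, we have $U^*(x)\ge U(x,a)$ for every query $x$. The action set $\mathcal A$ is finite, so $U^*$ is a maximum of finitely many integrable functions. Hence $U^*$ is measurable and integrable, since $|U^*|\le\sum_a|U(\cdot,a)|$. Taking expectations gives $\mathbb E_x[U^*(x)]\ge\mathbb E_x[U(x,a)]$. This holds for every $a$, so taking the maximum over $a$ on the right-hand side (a finite maximum, so it is attained) yields the displayed inequality.

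\emph{Strict inequality.} Let $a^\dagger\in\arg\max_a\mathbb E_x[U(x,a)]$; it exists because $\mathcal A$ is finite. Write
\[
\mathbb E_x[U^*(x)]-\mathbb E_x[U(x,a^\dagger)]=\mathbb E_x\big[U^*(x)-U(x,a^\dagger)\big].
\]
The integrand is nonnegative everywhere. By hypothesis it is strictly positive on $S_{a^\dagger}=\{x: U(x,a^\dagger)<U^*(x)\}$, and $\Pr(S_{a^\dagger})>0$. A nonnegative function that is strictly positive on a set of positive measure has strictly positive integral. One way to see this is to write $S_{a^\dagger}=\bigcup_n\{U^*-U(\cdot,a^\dagger)>1/n\}$. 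By continuity of measure, some $n$ gives a set of positive probability $p$, and then the integral is at least $p/n>0$. This gives strict inequality for $a^\dagger$, and therefore for the maximum over $a$.

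\emph{Main obstacle.} The only delicate point is measure-theoretic bookkeeping: we need measurability of the sets $S_a$ and the conversion from "positive on a set of positive probability" to "positive expectation." Both follow from finiteness of $\mathcal A$ and the integrability assumption. Note that strictness only uses the hypothesis for the maximizing action $a^\dagger$. The stated condition, that every action is suboptimal on a set of positive probability, is therefore more than sufficient, and I would remark on this.
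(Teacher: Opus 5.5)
Your proposal is correct and follows essentially the same route as the paper: pointwise domination $U^*(x)\ge U(x,a)$, integration, and then strict positivity of the expected nonnegative gap on a positive-probability set, with finiteness of $\mathcal A$ guaranteeing that a best fixed action exists. Your additions are sound refinements of details the paper leaves implicit: the integrability of $U^*$ via $|U^*|\le\sum_a|U(\cdot,a)|$, the $1/n$ continuity-of-measure step, and the observation that the hypothesis is only needed for the maximizing action $a^\dagger$.
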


\begin{proof}
For each $a$ and $x$, $U^*(x)-U(x,a)\geq0$. Taking expectations gives the
weak inequality. Under the stated positive-probability condition, each
nonnegative difference has strictly positive expectation. Since
$\mathcal A$ is finite, the inequality is strict against the best fixed action.
\end{proof}

\subsection{A Fixed-Reference Regularized Objective}
\label{app:regularized_policy}

The Stage~2 loss includes both an entropy bonus and a KL penalty to the
warm-start policy. Their effect can be characterized for an idealized,
unrestricted categorical policy with fixed coefficients.

\begin{proposition}[Idealized regularized policy]
\label{prop:regularized_policy}
Fix $x$, a full-support reference policy $\pi_0(\cdot\mid x)$, and coefficients
$\alpha,\beta\geq0$ with $\alpha+\beta>0$. The unique maximizer over
$\pi(\cdot\mid x)\in\Delta(\mathcal A)$ of
\[
\sum_a\pi(a\mid x)U(x,a)
-\beta D_{\mathrm{KL}}\!\big(\pi(\cdot\mid x)\,\|\,\pi_0(\cdot\mid x)\big)
+\alpha H\!\big(\pi(\cdot\mid x)\big)
\]
is
\begin{equation}
\label{eq:regularized_policy}
\pi^\dagger(a\mid x)
=
\frac{\pi_0(a\mid x)^{\beta/(\alpha+\beta)}
      \exp\!\big(U(x,a)/(\alpha+\beta)\big)}
{\sum_{a'}\pi_0(a'\mid x)^{\beta/(\alpha+\beta)}
      \exp\!\big(U(x,a')/(\alpha+\beta)\big)}.
\end{equation}
\end{proposition}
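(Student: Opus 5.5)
My plan is to reduce the objective to a single KL divergence against the claimed $\pi^\dagger$, so that existence, the closed form, and uniqueness all come out at once without any Lagrange-multiplier bookkeeping. Write $\gamma=\alpha+\beta>0$ and drop the conditioning on $x$. Because $\pi_0$ has full support, $\log\pi_0(a)$ is finite for every $a$. This makes the functional well defined on all of $\Delta(\mathcal A)$, using the convention $0\log 0=0$.

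First, I expand the two regularizers. We have $-\beta D_{\mathrm{KL}}(\pi\|\pi_0)=-\beta\sum_a\pi(a)\log\pi(a)+\beta\sum_a\pi(a)\log\pi_0(a)$ and $\alpha H(\pi)=-\alpha\sum_a\pi(a)\log\pi(a)$. Combining these with the utility term, the objective becomes
\[
J(\pi)=\sum_a\pi(a)\big(U(a)+\beta\log\pi_0(a)\big)-\gamma\sum_a\pi(a)\log\pi(a).
\]
Dividing by $\gamma$, this equals $\sum_a\pi(a)\log w(a)-\sum_a\pi(a)\log\pi(a)$, where $w(a)=\pi_0(a)^{\beta/\gamma}\exp(U(a)/\gamma)>0$.

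Second, set $Z=\sum_a w(a)$ and $\pi^\dagger=w/Z$, which is exactly Eq.~\eqref{eq:regularized_policy}. Substituting $\log w=\log\pi^\dagger+\log Z$ gives the identity
\[
J(\pi)=\gamma\log Z-\gamma\,D_{\mathrm{KL}}(\pi\,\|\,\pi^\dagger).
\]
This is the same device that yields Eq.~\eqref{eq:boltzmann}, and indeed the case $\beta=0$ recovers that result with $\tau=\alpha$.

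Third, I invoke Gibbs' inequality: $D_{\mathrm{KL}}(\pi\|\pi^\dagger)\ge 0$, with equality iff $\pi=\pi^\dagger$, since $\pi^\dagger$ has full support. Because $\gamma>0$, $J$ is maximized exactly at $\pi^\dagger$, with optimal value $\gamma\log Z$, and this maximizer is unique.

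The only step needing care is the bookkeeping at the boundary of the simplex, where some $\pi(a)=0$. There, terms like $\pi(a)\log\pi_0(a)$ vanish and $0\log 0=0$, so the identity in the second step still holds termwise. The degenerate case $\beta=0$ with $\alpha>0$ is also covered, since then $\pi_0^{0}=1$. I expect no real obstacle beyond stating these conventions.
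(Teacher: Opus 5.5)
Your proof is correct and follows the same route as the paper: both rewrite the objective as $(\alpha+\beta)\log Z-(\alpha+\beta)D_{\mathrm{KL}}(\pi\,\|\,\pi^\dagger)$ and obtain existence and uniqueness from the nonnegativity of the KL divergence, which vanishes only at $\pi=\pi^\dagger$. Your explicit treatment of the simplex boundary (the $0\log 0$ convention, with $\log\pi_0$ finite by full support) and of the $\beta=0$ case adds care that the paper's proof leaves implicit.
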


\begin{proof}
Expanding the KL divergence and entropy shows that the objective equals
$(\alpha+\beta)\log Z(x)
-(\alpha+\beta)D_{\mathrm{KL}}(\pi(\cdot\mid x)\,\|\,\pi^\dagger(\cdot\mid x))$,
where $Z(x)$ is the denominator in Eq.~\eqref{eq:regularized_policy}.
The KL divergence is nonnegative and vanishes only when the two policies coincide, proving a unique optimum.
\end{proof}

When $\beta=0$, Eq.~\eqref{eq:regularized_policy} reduces to
Eq.~\eqref{eq:boltzmann} with $\tau=\alpha$. A nonuniform reference generally
changes the solution when $\beta>0$. The proposition describes the exact
objective with fixed coefficients and known utilities. Stage~2 instead uses
sampled, group-standardized rewards, a clipped policy surrogate, a finite
router, and an adaptive KL coefficient; the proposition gives no convergence
or regret guarantee for the implemented Stage~2 optimization procedure.

\subsection{Warm-Start KL Decomposition}
\label{app:warmstart_proof}

Stage~1 fits the support-form marginal on the enumerated warm-start actions.
For the full action set, the thinking-depth head assigns equal initial probability to every thinking option.

\begin{proposition}[Warm-start KL decomposition]
\label{prop:warm_start_factorization}
Fix $x$. Let $\pi^*$ be the full-action distribution of
Eq.~\eqref{eq:boltzmann}, with support-form marginal $\pi^*_\mu$ and
conditional thinking policy $\pi^*_{\theta\mid\mu}$. Suppose the warm-start
policy factorizes as
$\pi_{\theta_0}(\mu,\theta\mid x)
=\pi_{\theta_0}^{\mu}(\mu\mid x)/|\Theta|$, with
$\pi_{\theta_0}^{\mu}(\mu\mid x)>0$ for all $\mu$. Then
\begin{equation}
\label{eq:kl_decomposition}
\begin{aligned}
D_{\mathrm{KL}}\!\big(\pi^*(\cdot\mid x)\,\|\,\pi_{\theta_0}(\cdot\mid x)\big)
={}&
D_{\mathrm{KL}}\!\big(\pi^*_\mu(\cdot\mid x)\,\|\,
                       \pi_{\theta_0}^{\mu}(\cdot\mid x)\big)\\
&+\log|\Theta|
-\mathbb E_{\mu\sim\pi^*_\mu(\cdot\mid x)}
 \big[H(\pi^*_{\theta\mid\mu}(\cdot\mid x,\mu))\big].
\end{aligned}
\end{equation}
The contribution from the conditional thinking policy lies in $[0,\log|\Theta|]$.
\end{proposition}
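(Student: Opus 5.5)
The plan is to apply the chain rule for KL divergence to the factorized joint distributions, and then evaluate the conditional term explicitly using the uniform thinking head.

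First, write $\pi^*(\mu,\theta\mid x)=\pi^*_\mu(\mu\mid x)\,\pi^*_{\theta\mid\mu}(\theta\mid x,\mu)$. This is always possible for a distribution on $\mathcal M\times\Theta$. Because the Boltzmann form has full support, $\pi^*_\mu>0$ and the conditional is well defined for every $\mu$. The warm start is $\pi_{\theta_0}(\mu,\theta\mid x)=\pi^\mu_{\theta_0}(\mu\mid x)\cdot u(\theta)$ with $u\equiv 1/|\Theta|$. Since $\pi^\mu_{\theta_0}>0$, the warm start has full support, so the KL divergence is finite and all logarithms below are defined.

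Second, expand
\[
\log\frac{\pi^*(\mu,\theta\mid x)}{\pi_{\theta_0}(\mu,\theta\mid x)}=\log\frac{\pi^*_\mu(\mu\mid x)}{\pi^\mu_{\theta_0}(\mu\mid x)}+\log\frac{\pi^*_{\theta\mid\mu}(\theta\mid x,\mu)}{1/|\Theta|}
\]
and take expectations under $\pi^*$. The first term depends only on $\mu$. Summing it over $\theta$ uses $\sum_\theta\pi^*_{\theta\mid\mu}=1$ and gives $D_{\mathrm{KL}}(\pi^*_\mu\,\|\,\pi^\mu_{\theta_0})$. The second term equals $\mathbb E_{\mu\sim\pi^*_\mu}\big[D_{\mathrm{KL}}(\pi^*_{\theta\mid\mu}\,\|\,u)\big]$. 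For each fixed $\mu$,
\[
D_{\mathrm{KL}}(\pi^*_{\theta\mid\mu}\,\|\,u)=\sum_\theta\pi^*_{\theta\mid\mu}\log\pi^*_{\theta\mid\mu}+\log|\Theta|=\log|\Theta|-H(\pi^*_{\theta\mid\mu}).
\]
Averaging over $\mu\sim\pi^*_\mu$ yields exactly the $\log|\Theta|-\mathbb E_\mu[H(\cdot)]$ term of Eq.~\eqref{eq:kl_decomposition}.

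For the final claim, the conditional contribution is an average of KL divergences to the uniform distribution, so it is nonnegative. It is also at most $\log|\Theta|$, because the entropy of any distribution on $\Theta$ satisfies $0\le H\le\log|\Theta|$. Averaging over $\mu$ preserves both bounds.

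I expect no real obstacle. The only care points are the following:
\begin{itemize}
\item Full support of both distributions, so that no $\log 0$ or infinite KL arises.
\item Noting that the $\pi^*$ of Eq.~\eqref{eq:boltzmann} here lives on the full action set $\mathcal A=\mathcal M\times\Theta$, not on $\mathcal A_{\mathrm{warm}}$.
\item Interchanging finite sums, which is trivial since $\mathcal A$ is finite.
\end{itemize}
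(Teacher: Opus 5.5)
Your proposal is correct and follows the paper's proof essentially verbatim: the KL chain rule splits off the support-form marginal divergence, the conditional term against the uniform thinking head evaluates to $\log|\Theta|-H(\pi^*_{\theta\mid\mu})$, and the interval $[0,\log|\Theta|]$ follows from $0\le H\le\log|\Theta|$. Your remarks on full support (finite utilities make $\pi^*$ strictly positive, and $\pi^{\mu}_{\theta_0}>0$ makes the reference strictly positive) only make explicit what the paper leaves implicit.
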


\begin{proof}
The KL chain rule splits the joint divergence into a divergence between
support-form marginals and the $\pi^*_\mu$-weighted conditional divergence.
For each $\mu$,
$D_{\mathrm{KL}}(\pi^*_{\theta\mid\mu}\,\|\,
\mathrm{Uniform}(\Theta))
=\log|\Theta|-H(\pi^*_{\theta\mid\mu})$.
Substitution gives Eq.~\eqref{eq:kl_decomposition}; the interval follows
from $0\leq H(\pi^*_{\theta\mid\mu})\leq\log|\Theta|$.
\end{proof}

This identity separates support-form marginal mismatch from mismatch due to a
uniform thinking head. It bounds only the latter; the marginal term can be
larger. The identity describes initialization mismatch and does not predict the empirical F1 improvement from Stage~2.

\section{Interpretation of the Cost--Quality Score}
\label{app:lens_derivations}

Equation~\eqref{eq:pareto_utility} is a weighted-sum scalarization of
measured F1 and normalized token costs. Its weights select an operating
point according to the user's relative valuation of these quantities;
performance away from the reported weight sweep is an empirical question.
Without the entropy term, maximizing expected utility over all categorical
policies chooses a utility-maximizing action for each query (with arbitrary
mixing among ties). The Shannon-entropy regularizer yields the soft labels in
Eq.~\eqref{eq:boltzmann}.

The measured token count in this formulation is an operational resource
cost. It is not the mutual-information rate in Shannon's rate--distortion
function, so the cost-quality scalarization alone does not constitute a
rate--distortion derivation of the Boltzmann policy.

\section{Limitations and Future Work}
\label{app:limitations_future}

\subsection{Limitations}

The scope of this paper is knowledge-intensive QA and fact verification, the task families on which the Direct/Summary/Raw alphabet is most directly meaningful; reasoning-only benchmarks (e.g., MATH, GSM8K, AIME) where retrieval is rarely useful would require a different action alphabet (e.g., chain-of-thought depth or tool calls) and are not evaluated here.
Stage~0 enumeration cost remains linear in $|\mathcal{A}_\mathrm{warm}|$, which constrains how rich the warm-start alphabet can be at training time; Stage~2 GRPO removes this constraint for the full alphabet $\mathcal{A}$ but introduces a host-call rollout budget of roughly $T_\mathrm{grpo} \cdot B \cdot G$ per Pareto operating point.
FORGE is trained against a fixed retrieval backend, so joint optimization of retrieval and routing is beyond the scope of this work.
In addition, the current router relies on compact query-level and retrieval-derived features as input; for document-understanding tasks where long-range context, layout structure, or cross-document interactions are central, and especially for multimodal settings involving image-text evidence, a context-aware or multimodal router may be needed to fully capture the support requirements.
On tasks where a single support form uniformly dominates, notably FEVER on non-thinking hosts where Summary~$\approx$~Raw for $96\%$ of queries, the general-purpose router does not improve on a task-specific fixed policy; on thinking-capable hosts this gap closes once the thinking-arm composition restores routing dynamic range.
Transfer accuracy is sensitive to the capability gap between source and target hosts: a source-trained router's support-form allocation can mismatch the target host's parametric knowledge, as observed on FEVER with DeepSeek-V3.2, where FORGE trails both fixed Summary and fixed Raw.
Stage~0 arm enumeration and Stage~2 training require labeled queries to compute the F1 reward, although inference with a trained router does not require labels. An LLM judge or verifier could supply proxy rewards for unlabeled adaptation, pending validation against true answer quality.

\subsection{Future Work}

Three directions follow from the current formulation.
A target-side calibration step that updates the last linear layer from a small labelled subset of the target host is a natural remedy for the source-target capability-gap limitation above.
Coupling support-form routing with retrieval or model routing would enlarge the discrete action alphabet; its training cost and empirical value remain to be tested.
Extending Stage~2 to a sequential cascade, in which the router decides whether to escalate from Direct to Summary to Raw based on intermediate host responses, would require accounting for the extra host calls and evaluating the resulting policy separately.
Replacing BM25 with dense (DPR or Contriever) or hybrid sparse-dense retrieval changes the upstream retriever, while $\mathcal{M}$ still describes what enters the host prompt. Comparing these backends could test the scope of FORGE's improvements in answer quality and token efficiency.

\subsection{Broader Impacts}
\label{app:broader_impacts}

FORGE changes which evidence and reasoning setting a frozen host receives, so its quality and cost effects depend on the task, host, and feature-acquisition protocol. The cached-feature comparisons measure routed-answer tokens after Full FORGE's host-dependent features have been collected. In the measured Fresh Online Qwen3-8B setting, Full FORGE improves F1 from $57.1$ to $59.5$ and reduces host tokens from $0.430$k to $0.384$k per query relative to Always-Raw, but its four pre-routing host calls raise median latency from $122.5$ to $271.4$ ms. FORGE-Lite avoids those calls and reaches $58.7$ F1 at $0.242$k tokens and $128.0$ ms. These measurements show a deployment trade-off; token counts alone do not establish energy savings or lower end-to-end cost for every provider.

The cost weights $(\lambda_\mathrm{in},\lambda_\mathrm{out})$ encode a chosen quality--cost preference. More aggressive cost weighting can select less evidence and may worsen answers or remove the grounding needed for attribution. A strong fixed form can also be preferable on a particular task: on FEVER with a non-thinking host, the Summary-pinned baseline slightly exceeds the general router's F1. In settings where citations or other grounding requirements matter, they should be measured directly and included as deployment criteria rather than assumed to follow from F1 or a change in cost weights. Although the host weights remain frozen, changing its inputs can change its errors; we have not evaluated whether routing amplifies or reduces bias, hallucination, or misuse risks outside the reported tasks.

\section{Use of Large Language Models}
\label{app:llm_use}

An AI assistant helped revise the manuscript's prose, organization, and internal consistency, including reviewing theoretical claims for assumptions and scope. The authors are responsible for verifying the technical claims, references, and empirical results in the submitted version.

\end{appendices}
\end{document}